\newtheorem{theorem}{Theorem}
\newtheorem{lemma}[theorem]{Lemma}
\newtheorem{remark}[theorem]{Remark}
\newtheorem{assumption}[theorem]{Assumption}
\newcommand{\be}{\begin{equation}}
\newcommand{\ee}{\end{equation}}
\newcommand{\ba}{\begin{array}}
\newcommand{\ea}{\end{array}}
\newcommand{\RR}{\mathbf R}
\newcommand{\ex}{{\bf\mathbb{E}}}
\newcommand{\XCal}{\mathcal{X}}
\newcommand{\YCal}{\mathcal{Y}}
\newcommand{\etal}{\textit{{et al. }}}
\newcommand{\st}{\textnormal{s.t.}}
\newcommand{\Prob}{\textnormal{Prob}}
\newcommand{\argmin}{\mathop{\rm argmin}}
\newcommand{\argmax}{\mathop{\rm argmax}}
\newcommand{\bpm}{\begin{pmatrix}}
\newcommand{\epm}{\end{pmatrix}}
\title{On the Iteration Complexity Analysis of Stochastic Primal-Dual Hybrid Gradient Approach with High Probability}
\author{
Linbo Qiao
\thanks{College of Computer, National University of Defense Technology, Changsha, China.  Email: linboqiao@gmail.com} 
\and
Tianyi Lin
\thanks{Department of Industrial Engineering and Operations Research, UC Berkeley, California, USA.}
\and
Qi Qin
\thanks{Department of Statistics, Stanford University, USA.} 
\and
Xicheng Lu$^*$ 
}
\begin{document}
\maketitle

\begin{abstract}
In this paper, we propose a stochastic Primal-Dual Hybrid Gradient (PDHG) approach for solving a wide spectrum of regularized stochastic minimization problems, where the regularization term is composite with a linear function. It has been recognized that solving this kind of problem is challenging since the closed-form solution of the proximal mapping associated with the regularization term is not available due to the imposed linear composition, and the per-iteration cost of computing the full gradient of the expected objective function is extremely high when the number of input data samples is considerably large. 

Our new approach overcomes these issues by exploring the special structure of the regularization term and sampling a few data points at each iteration. Rather than analyzing the convergence in expectation, we provide the detailed iteration complexity analysis for the cases of both uniformly and non-uniformly averaged iterates with high probability. This strongly supports the good practical performance of the proposed approach. Numerical experiments demonstrate that the efficiency of stochastic PDHG, which outperforms other competing algorithms, as expected by the high-probability convergence analysis.
\end{abstract}

\noindent {\bf Keywords:} Stochastic Primal-Dual Hybrid Gradient; Iteration Complexity; High Probability; Graph-Guided Regularized Logistic Regression.

\section{Introduction}\label{sec:introduction}
In this paper, we are interested in solving a class of compositely regularized convex optimization problems:
\be\label{prob}
\displaystyle \min_{x\in\XCal} \ \ex\left[l(x,\xi) \right] + r(Fx),
\ee
where $x\in\RR^d$,  
$\XCal$ is a convex compact set with diameter $D_x$, 
$r:\RR^l\rightarrow\RR$ is a convex regularization function, 
and $F\in\RR^{l\times d}$ is a penalty matrix (not necessarily diagonal) specifying the desired structured sparsity pattern in $x$. 
Furthermore, we denote $l(\cdot,\cdot): \RR^d\times\Omega\rightarrow\RR$ as a smooth convex function when applying a prediction rule $x$ on a sample dataset $\{ \xi_i=(a_i,b_i) \}$, and the corresponding expectation is denoted by $l(x) = \ex\left[l(x,\xi) \right]$.

When $F = I$, the above formulation accommodates quite a few classic classification and regression models including 
Lasso obtained by setting $l(x,\xi_i) = \frac{1}{2}\left\|a_i^\top x - b_i\right\|^2$ and $r(x) = \lambda\left\|x\right\|_1$, 
and linear support vector machine (SVM) obtained by letting $l(x,\xi_i) = \max\left(0,1-b_i \cdot a_i^\top x\right)$ and $r(x) = (\lambda/2)\left\|x\right\|_2^2$, where $\lambda>0$ is a parameter.
Moreover, the general structure of $F$ enables problem~\eqref{prob} to cover more complicated problems arising from machine learning, such as the fused Lasso \cite{Tibshirani-2011-Solution}, fused logistic regression and graph-guided regularized minimization \cite{Friedman-2009-Elements}.

However, this modeling power also comes with a challenge in computation. In particular, when $F$ is not diagonal, it is very likely that the proximal mapping associated with $r(Fx)$ does not admit a closed-form expression. To cope with this difficulty, we could reformulate problem~\eqref{prob} as a convex-concave saddle point problem by exploiting some special structure of the regularization term, and then resort to the Primal-Dual Hybrid Gradient (PDHG) approach \cite{Zhu-2008-Efficient}. This approach has exhibited attractive numerical performance in image processing and image restoration applications \cite{Esser-2010-General,Bonettini-2012-Convergence,Zhu-2008-Efficient,Zhang-2011-Unified}.
We refer readers to \cite{Chambolle-2011-First,Goldstein-2013-Adaptive,He-2012-Convergence} to visit convergence properties of PDHG and its variants.

In practice, $\ex\left[l(x,\xi) \right]$ is often replaced by its empirical average on a set of training samples. 
In this case, the computational complexity of calling the function value or the full gradient of $l(x)$ is proportional to the number of training samples, which is extremely huge for modern data-intensive applications. This could make PDHG and linearized PDHG suffer severely from the very poor scalability. Therefore, it is promising to propose a Stochastic variant of PDHG (SPDHG). Like many well-studied incremental or stochastic gradient methods \cite{Xiao-2010-Dual,Schmidt-2013-Minimizing,Lan-2015-optimal,Agarwal-2012-Information,Zhang-2015-Stochastic}, we draw a sample $\xi^{k+1}$ in random and compute a noisy gradient $\nabla l(x^k,\xi^{k+1})$ at the $k$-th iteration with the current iterate $x^k$. As a result, the proposed SPDHG method enjoys the capability of dealing with very large-scale datasets.

Another way to handle the non-diagonal $F$ and the expected objective function $\ex\left[l(x,\xi) \right]$ is stochastic ADMM-like methods \cite{Ouyang-2013-Stochastic,Zhao-2015-Adaptive,Gao-2014-Information,Suzuki-2013-Dual,Wang-2012-Online,Azadi-2014-Towards,Zhong-2014-Fast,Suzuki-2014-Stochastic} which aim for solving the following problem after introducing an additional variable $z$:
\be\label{prob-ADMM}
\displaystyle \min_{x\in\XCal, z = Fx} \ l(x) + r(z),
\ee
whose augmented Lagrangian function is given by $l(x) + r(z) + \lambda^{\top}(z - Fx) + \frac{\gamma}{2}\| z - Fx \|^2_2$. Comparing this function with the convex-concave problem~\eqref{prob} in Section \ref{sec:background}, we can see that ADMM-like methods need to update one more vector variable than PDHG-type methods in every iteration. Thus, it can be expected that the per-iteration computational cost of ADMM-like methods is higher than our proposed algorithm SPDHG, as confirmed by the numerical experiments in Section \ref{sec:experiment}.

\textbf{Our contribution:} To the best of our knowledge, we propose in this paper a new convex-concave formulation of problem~\eqref{prob}, { as well as the first stochastic variant of the PDHG algorithm for both uniformly and non-uniformly averaged iterates with achievable iteration complexities. We also present the iteration complexity analysis of the proposed algorithms in the sense of high-probability.} In particular, for the proposed algorithm, the probability of converging with a rate higher than $O(\frac{1}{\sqrt{t}})$ can be exponentially small, which has also been established for the well-known stochastic ADMM (SADMM) \cite{Ouyang-2013-Stochastic}. Moreover, as mentioned before, the significant advantage gained by SPDHG beyond SADMM is the low per-iteration complexity. The effectiveness and efficiency of the proposed SPDHG algorithm are demonstrated by encouraging empirical evaluation in graph-guided regularized minimization tasks on different genres of real-world datasets, the convergence { behaviors} strongly support the high-probability analysis. 

\textbf{Related work:} Given the importance of problem~\eqref{prob}, various stochastic optimization algorithms have been proposed to solve problem~\eqref{prob} or the more general form of problem~\eqref{prob}, which can be written { as}:
\begin{eqnarray}\label{prob-general}
& \displaystyle \min_{x\in\XCal,y\in\RR^d} \ \ex\left[l(x,\xi) \right] + r(y), & \st \ Ax + By = b.
\end{eqnarray}
It is easy to verify that problem~\eqref{prob} is a special case of problem~\eqref{prob-general} when $A=F$, $B=-I$ and $b=0$. 

In solving problem~\eqref{prob-general}, Wang and Banerjee \cite{Wang-2012-Online} proposed an online ADMM that requires an easy proximal mapping of $l$. 
However, this is difficult for many loss functions such as logistic loss function. Ouyang \etal \cite{Ouyang-2013-Stochastic}, Suzuki \cite{Suzuki-2013-Dual}, Azadi and Sra \cite{Azadi-2014-Towards}, Gao \etal \cite{Gao-2014-Information}, and recently Zhao \etal \cite{Zhao-2015-Adaptive} developed several stochastic variants of ADMM, which linearize $l$ by using its noisy subgradient or gradient and add a varying proximal term.
Furthermore, Zhong and Kwok \cite{Zhong-2014-Fast} and Suzuki \cite{Suzuki-2014-Stochastic} respectively proposed a stochastic averaged gradient-based { ADMM} and a stochastic dual coordinate ascent { ADMM}, which can both obtain improved iteration complexities. However, these methods did not explore the structure of $r$ and need to update one more vector variable than PDHG-type methods in every iteration. We will show in the experiments that our proposed SPDHG algorithm is far more efficient than these algorithms.

It is worth mentioning that some other stochastic version of the primal-dual gradient approach was also analyzed in recent work \cite{Lan-2015-optimal, Chambolle-2017-Stochastic}. However, their convex-concave formulation is different from ours, and their algorithm cannot be applied to solve problem~\eqref{prob}. Regarding the iteration complexity, the proposed SPDHG algorithm has accomplished the best possible one for first-order stochastic algorithms under general convex objective functions \cite{Agarwal-2012-Information}. A better convergence rate of $O(\frac{1}{t^2}+\frac{1}{\sqrt{t}})$ can be obtained by using Nesterov's acceleration technique in \cite{Xiao-2010-Dual}. 

The most related algorithm to our proposed SPDHG algorithm { are} the SPDC algorithm \cite{Zhang-2015-Stochastic} { and} its adaptive variant \cite{Zhu-2015-Adaptive}. Similar to our SPDHG algorithm, the SPDC algorithm is also a stochastic variant of the batch primal-dual algorithm developed by Chambolle and Pock \cite{Chambolle-2011-First}, which alternates between maximizing over a randomly chosen dual variable and minimizing over the primal variable. However, the SPDC algorithm does not explore the special structure of the regularization term (Assumption \ref{assumption_3}), and their convex-concave formulation is different from ours. This leads to the inability of the SPDC algorithm to solve problem~\eqref{prob}. Specifically, \cite{Zhang-2015-Stochastic} suggests to reformulate problem~\eqref{prob} as
\begin{equation}\label{prob-dual}
\displaystyle  \min_{x\in\XCal}\max_{y\in\RR^d} \left\{ \ex\left[ \left\langle y, x\right\rangle - l^*(y,\xi)\right] + r(Fx)\right\},
\end{equation}
where $l^*(y,\xi) = \sup_{\alpha\in\RR^d}\left\{\left\langle\alpha, y\right\rangle - l(\alpha,\xi)\right\}$ is the convex conjugate of $l(x,\xi)$. Then the SPDC algorithm in solving problem~\eqref{prob-dual} requires that the proximal mapping of $l^*$ and $r(Fx)$ { are} easily computed, which is somewhat strong for a variety of application problems. In addition, the SPDC algorithm requires $r$ to be strongly convex.

In contrast, our SPDHG algorithm only needs the smoothness of $l$ and the convexity of $r$, and hence efficiently solves a wide range of graph-guided regularized optimization problems, which cannot be solved by the SPDC algorithm and its adaptive variant.

\section{Problem Set-Ups}\label{sec:background}
\subsection{Assumptions}
We make the following assumptions (Assumption \ref{assumption_1}-\ref{assumption_4}) regarding problem~\eqref{prob} throughout the paper:
\begin{assumption}\label{assumption_1}
The optimal set of problem~\eqref{prob} is nonempty.
\end{assumption}

\begin{assumption}\label{assumption_2}
$l(\cdot)$ is continuously differentiable with Lipschitz continuous gradient. That is, there exists a constant $L>0$ such that
\begin{displaymath}
\left\|\nabla l(x_1) - \nabla l(x_2)\right\|\leq L\left\| x_1 - x_2\right\|, \forall x_1, x_2\in\XCal.
\end{displaymath}
\end{assumption}
Many formulations in machine learning satisfy Assumption \ref{assumption_2}. The following least square and logistic functions are two commonly used ones:
\begin{displaymath}
l(x,\xi_i) = \frac{1}{2}\left\| a_i^\top x - b_i\right\|^2 \mbox{ or }\\ \ l(x,\xi_i) = \log\left( 1+\exp\left(-b_i \cdot a_i^\top x\right) \right),
\end{displaymath}
where $\xi_i = \left(a_i,b_i\right)$ is a single { sample point}.

\begin{assumption}\label{assumption_3}
$r(x)$ is a continuous function which is possibly non-smooth, and it can be described as follows:
\begin{displaymath}
r(x) = \max_{y\in\YCal}\left\langle y,x\right\rangle,
\end{displaymath}
where $\YCal\in\RR^d$ is a convex compact set with diameter $D_y$.
\end{assumption}
Note that Assumption \ref{assumption_3} is reasonable for the learning problems with a norm regularization such as $\ell_1$-norm or nuclear norm:
\begin{eqnarray*}
& \left\| x\right\|_1 & = \max\left\{ \left\langle y,x\right\rangle | \left\| y\right\|_{\infty} \leq 1\right\}, \\
& \left\| X\right\|_* & = \max\left\{ \left\langle Y,X\right\rangle | \left\| Y\right\|_{2} \leq 1\right\}. 
\end{eqnarray*}
\begin{assumption}\label{assumption_4}
The function $l(x)$ is easy for gradient estimation. 
That is to say, any stochastic gradient estimation $\nabla l(\cdot,\xi)$ for $\nabla l(\cdot)$ at x satisfies
\begin{displaymath}
\ex\left[\nabla l(x,\xi)\right] = \nabla l(x),
\end{displaymath}
and
\begin{displaymath}
\ex\left[\left\| \nabla l(x,\xi) - \nabla l(x)\right\|^2\right] \leq \sigma^2, 
\end{displaymath}
and
\begin{displaymath}
\ex\left[\exp\left(\frac{\left\|\nabla l(x,\xi) - \nabla l(x)\right\|^2}{\sigma^2}\right)\right] \leq \exp\left(1\right), 
\end{displaymath}
\end{assumption}
where $\sigma>0$ is a constant. { Indeed, $\sigma^2$ can be interpreted as the variance of stochastic gradient estimation $\nabla l(x,\xi)$, and need to be finite in the iteration complexity analysis of stochastic gradient methods. We refer the interested readers to \cite{Nemirovski-1983-Problem,Nemirovski-2009-Robust,Lan-2012-Optimal} for more details. }

\begin{assumption}\label{assumption_5}
$l(\cdot)$ is \textbf{$\mu$-strongly convex} at $x$. In other words, there exists a constant $\mu>0$ such that
\begin{displaymath}
l(y) - l(x) - \left( y - x\right)^\top\nabla l(x) \geq \frac{\mu}{2}\left\| y - x\right\|^2, \quad \forall y\in\XCal.
\end{displaymath}
\end{assumption}
We remark that Assumption \ref{assumption_5} is optional, and it is only necessary for the theoretical analysis that can lead to a lower iteration complexity.

\subsection{Convex-Concave Saddle Point Problem}
According to Assumption \ref{assumption_3}, we are able to rewrite problem~\eqref{prob} as the following convex-concave saddle point problem:
\be\label{prob:N}
\min_{x\in\XCal}\max_{y\in\YCal}\left\{ P(y,x) = l(x) + \left\langle y,Fx\right\rangle\right\}.
\ee
\begin{remark}
We remark here that the formulation~\eqref{prob:N} is greatly different from those used in \cite{Lan-2015-optimal, Zhang-2015-Stochastic, Zhu-2015-Adaptive}, where they formulate problem~\eqref{prob} as another convex-concave saddle point problem~\eqref{prob-dual} by using the convex conjugate of $l(\cdot)$. Therefore, their algorithms are limited to solving problem~\eqref{prob} due to the fact that the proximal mapping of $r(Fx)$ is difficult to compute. 
\end{remark}
This problem can be solved by Linearized PDHG (LPDHG) with the following iteration scheme:
\begin{eqnarray}
y^{k+1} & := & \argmax_{y\in\YCal} \left\{ P(y,x^k) - \frac{1}{2s}\left\| y-y^k\right\|^2 \right\} \label{LPDHG_Update_Y}, \\
x^{k+1} & := & \prod_{\XCal}\left[ x^k - \beta\left( \nabla l(x^k) + F^\top y^{k+1}\right) \right] \label{LPDHG_Update_X}.
\end{eqnarray}
where $\prod_{\XCal}(x)$ is the projection of $x$ onto the convex set $\XCal$, { and $s>0$ is a positive constant. As mentioned before, the above algorithm is more efficient than ADMM-like methods. Indeed, the scheme of Gradient-based ADMM \cite{Gao-2014-Information} for solving problem~\eqref{prob-ADMM} is as follows,
\begin{eqnarray*}
z^{k+1} & := & \argmin \left\{ r(z) - \left\langle\lambda^k, z - Fx^k \right\rangle + \frac{\rho}{2}\left\| z-Fx^k \right\|^2 \right\}, \\
x^{k+1} & := & \prod_{\XCal} \left\{ x^k - \rho\left(\nabla l(x^k) + F^\top\lambda^k - \rho(z^{k+1}-Fx^k)\right)\right\},  \\
\lambda^{k+1} & := & \lambda^k - \rho\left(z^{k+1}-Fx^{k+1}\right), 
\end{eqnarray*}
where one more vector $\lambda$ needs to be updated.}

However, the above algorithm is inefficient since computing $\nabla l(x^k)$ in each iteration is very costly when the total number of samples $n$ is large. This inspires us to propose a stochastic variant of PDHG, where only the noisy gradient $\nabla l(x^k,\xi^{k+1})$ is computed at each step.

\section{The Algorithm and Convergence Results}\label{sec:SPDHG}

\begin{algorithm}[t]
\caption{SPDHG}\label{ALG:SPDHG}
\begin{algorithmic}
\STATE \textbf{Initialize:} $x^0$ and $y^0$.
\FOR {$k = 0,1,2,\cdots$}
\STATE Choose one data sample $\xi^{k+1}$ randomly.
\STATE Update $y^{k+1}$ according to Eq.~\eqref{LPDHG_Update_Y}.
\STATE Update $x^{k+1}$ according to Eq.~\eqref{SPDHG_Update_X}.
\ENDFOR
\STATE \textbf{Output:} $\bar{x}^t = \sum\limits_{k=0}^t \alpha^{k+1} x^{k+1}$ and $\bar{y}^t = \sum\limits_{k=0}^t \alpha^{k+1} y^{k+1}$.
\end{algorithmic}
\end{algorithm}

In this section, we first propose our Stochastic Primal-Dual Hybrid Gradient (SPDHG) algorithms with either uniformly or non-uniformly averaged iterates for solving problem~\eqref{prob:N}; and then provide the main result of the convergence property of the proposed algorithm.

\subsection{Algorithm}
The SPDHG is presented in Algorithm \ref{ALG:SPDHG}, where we have addressed the following three important issues: 
how to apply the noisy gradient, how to choose the step-size, and how to determine the weights for the non-uniformly averaged iterates.

\textbf{Stochastic Gradient:} 
Our SPDHG algorithm shares some common features with the LPDHG algorithm. 
In fact, the $y$-subproblems for both algorithms are essentially the same, while for the $x$-subproblem we adopt the noisy gradient $\nabla l(x^k,\xi^{k+1})$ in SPDHG rather than the full gradient $\nabla l(x^k)$ in LPDHG, \textit{i.e.},
\begin{equation}\label{SPDHG_Update_X}
x^{k+1} := \prod_{\XCal}\left[ x^k - \beta^{k+1}\left( \nabla l(x^k,\xi^{k+1}) + F^\top y^{k+1}\right) \right], 
\end{equation}
where $\prod_{\XCal}(x)$ is the projection of $x$ onto the convex set $\XCal$. That is, in SPDHG we first maximize over the dual variable and then perform one-step projected stochastic gradient descent along the direction $- \nabla l(x^k,\xi^{k+1}) - F^\top y^{k+1}$ with step-size $\beta^{k+1}$.

\textbf{The Step-Size $\beta^{k+1}$:} 
The choice of the step-size $\beta^{k+1}$ varies with respect to the different conditions satisfied by the objective function $l(\cdot)$. 
Different step-size rules also lead to different convergence rates.
Note that a sequence of vanishing step-sizes is necessary since we do not adopt any technique of variance reduction in the SPDHG algorithm.

\textbf{Non-uniformly Averaged Iterates:}
It was shown in \cite{Azadi-2014-Towards} that the non-uniformly averaged iterates generated by stochastic algorithms converge with fewer iterations. 
Inspired by their work, through non-uniformly averaging the iterates of the SPDHG algorithm and adopting a slightly modified step-size, we manage to establish an accelerated convergence rate of $O(\frac{1}{t})$ in expectation.

For the convenience of readers, we summarize the convergence properties with respect to different settings in Table \ref{tab:result}.
\begin{table}[t]
\caption{Convergence properties.}\label{tab:result}
\centering
\begin{tabular}{|c|c|c|c|} \hline
$l$ & General Convex & \multicolumn{2}{c|}{Strongly Convex} \\ \hline
$\beta^{k+1}$ & $\frac{1}{\sqrt{k+1}+L}$ & $\frac{1}{\mu (k+1)+L}$ & $\frac{2}{\mu (k+2)+2L}$ \\ \hline
$\alpha^{k+1}$ &  \multicolumn{2}{c|}{$\frac{1}{t+1}$} & $\frac{2(k+1)}{(t+1)(t+2)}$ \\ \hline
Rate & $O(\frac{1}{\sqrt{t}})$ & $O(\frac{\log(t)}{t})$ & $O(\frac{1}{t})$ \\ \hline
\end{tabular}
\end{table}

\subsection{Convergence of uniformly averaging under convex objective functions}
In this subsection, we present the convergence result for uniformly averaged iterates under general convex objective functions in the following theorem.
\begin{theorem}\label{Theorem-PDHG-Convergence}
Denote $\beta^{k+1}$ and $\alpha^{k+1}$ as shown in Table \ref{tab:result}, then for any stationary point $\left(y^*,x^*\right)$ of problem~\eqref{prob:N}, $\left(\bar{y}^{t},\bar{x}^{t}\right)$ generated by Algorithm~\ref{ALG:SPDHG} converges to $\left(y^*,x^*\right)$ with $O(\frac{1}{\sqrt{t}})$ rate in expectation.
\end{theorem}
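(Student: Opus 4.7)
The plan is to bound the primal-dual gap $P(y,\bar{x}^t)-P(\bar{y}^t,x)$ for an arbitrary $(y,x)\in\YCal\times\XCal$ and then take the supremum over $(y,x)$, which is finite by the boundedness of $\XCal$ and $\YCal$ (with diameters $D_x$ and $D_y$). Since $\bar{x}^t,\bar{y}^t$ are uniform averages with weights $\alpha^{k+1}=1/(t+1)$ and since $l$ is convex and the coupling $\langle y,Fx\rangle$ is bilinear, Jensen's inequality reduces the task to showing $\ex\bigl[\sum_{k=0}^t P(y,x^{k+1})-P(y^{k+1},x)\bigr]=O(\sqrt{t+1})$.

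First I would write the first-order optimality conditions. For the $y$-subproblem~\eqref{LPDHG_Update_Y}, $\langle y-y^{k+1},Fx^k-\tfrac{1}{s}(y^{k+1}-y^k)\rangle\le0$ for every $y\in\YCal$, and for the projection step~\eqref{SPDHG_Update_X}, $\langle x-x^{k+1},-\tfrac{1}{\beta^{k+1}}(x^{k+1}-x^k)-\nabla l(x^k,\xi^{k+1})-F^\top y^{k+1}\rangle\le0$ for every $x\in\XCal$. Applying the three-point identity $\langle a-b,a-c\rangle=\tfrac12\|a-b\|^2+\tfrac12\|a-c\|^2-\tfrac12\|b-c\|^2$ converts both into telescoping quantities in $\|y^{k+1}-y\|^2$ and $\|x^{k+1}-x\|^2$ with residuals $\|y^{k+1}-y^k\|^2$ and $\|x^{k+1}-x^k\|^2$. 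Next I would combine these with the descent inequality coming from Assumption~\ref{assumption_2}, namely $l(x^{k+1})-l(x)\le\langle\nabla l(x^k),x^{k+1}-x\rangle+\tfrac{L}{2}\|x^{k+1}-x^k\|^2$, and split the true gradient as $\nabla l(x^k)=\nabla l(x^k,\xi^{k+1})+\delta^{k+1}$. The inner product against the stochastic gradient is absorbed by the $x$-optimality, the cross term $\langle\delta^{k+1},x^k-x\rangle$ has zero conditional expectation by Assumption~\ref{assumption_4}, and $\langle\delta^{k+1},x^{k+1}-x^k\rangle$ is controlled by Young's inequality as $\tfrac{\beta^{k+1}}{2}\|\delta^{k+1}\|^2+\tfrac{1}{2\beta^{k+1}}\|x^{k+1}-x^k\|^2$, whose expectation gives a $\tfrac{\beta^{k+1}\sigma^2}{2}$ contribution plus an $\|x^{k+1}-x^k\|^2$ term that must be absorbed.

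I would then have to handle the block-coupling mismatch: the $y$-step sees $Fx^k$ whereas what naturally telescopes against $l(x^{k+1})$ is $\langle y^{k+1},Fx^{k+1}\rangle$. To reconcile this I add and subtract $\langle y^{k+1}-y^k,F(x^{k+1}-x^k)\rangle$ and bound it via Young's inequality as $\tfrac{s\|F\|^2}{2}\|x^{k+1}-x^k\|^2+\tfrac{1}{2s}\|y^{k+1}-y^k\|^2$, so that the latter cancels the residual from the $y$-optimality provided $s$ is fixed small enough. Summing from $k=0$ to $t$ and taking expectations, the $\|y^{k+1}-y\|^2$ terms telescope to a constant of order $D_y^2/s$, the $\|x^{k+1}-x\|^2$ terms telescope to at most $D_x^2/\beta^{t+1}$, and the leftover coefficient on $\|x^{k+1}-x^k\|^2$, namely $\tfrac{L}{2}+\tfrac{s\|F\|^2}{2}+\tfrac{1}{2\beta^{k+1}}-\tfrac{1}{2\beta^{k+1}}-\tfrac{1}{2\beta^{k+1}}$, becomes nonpositive for the choice $\beta^{k+1}=1/(\sqrt{k+1}+L)$ in Table~\ref{tab:result} (up to a harmless tuning of $s$). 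The noise contribution is $\tfrac{\sigma^2}{2}\sum_{k=0}^t\beta^{k+1}=O(\sqrt{t+1})$, so dividing the aggregate bound by $t+1$ yields the advertised $O(1/\sqrt{t})$ rate in expectation.

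The main obstacle I anticipate is this last accounting step: arranging the $\|x^{k+1}-x^k\|^2$ coefficient to be nonpositive while simultaneously (i) absorbing the Lipschitz term $\tfrac{L}{2}$, (ii) absorbing the Young's-inequality debt from the noise splitting, and (iii) absorbing the block-coupling Young term $\tfrac{s\|F\|^2}{2}$, all under the \emph{vanishing} step-size $\beta^{k+1}\downarrow0$. The cleanest route is to pick $s$ so that $s\|F\|^2+L\le 1/\beta^{1}$ is comfortable and then let the extra $\sqrt{k+1}$ in $1/\beta^{k+1}$ do the absorbing, which also determines the constants appearing in the final $O(1/\sqrt{t})$ bound.
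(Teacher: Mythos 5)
Your overall architecture coincides with the paper's proof (Lemma~\ref{Lemma-PDHG} plus summation): first-order optimality conditions for the two subproblems, the three-point identity to create telescoping distances, the descent lemma from Assumption~\ref{assumption_2}, the split $\nabla l(x^k,\xi^{k+1})=\nabla l(x^k)+\delta^{k+1}$ with the martingale term $\langle\delta^{k+1},x^*-x^k\rangle$ killed by conditional expectation, and an Abel-type summation at the end. The sources of each $O(1/\sqrt{t})$ contribution are correctly identified. However, the final absorption step as you wrote it does not close, for two concrete reasons. First, your coefficient on $\|x^{k+1}-x^k\|^2$ lists the credit $-\tfrac{1}{2\beta^{k+1}}$ twice, while the three-point identity supplies it only once; with your Young split $\langle\delta^{k+1},x^{k+1}-x^k\rangle\le\tfrac{\beta^{k+1}}{2}\|\delta^{k+1}\|^2+\tfrac{1}{2\beta^{k+1}}\|x^{k+1}-x^k\|^2$ the noise debt consumes the entire available credit, and the true leftover coefficient is $\tfrac{L}{2}+\tfrac{s\|F\|^2}{2}>0$, which the vanishing step size cannot absorb. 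Second, since the $y$-update uses $Fx^k$ without extrapolation, the block-coupling mismatch is $\langle y^*-y^{k+1},F(x^{k+1}-x^k)\rangle$, not $\langle y^{k+1}-y^k,F(x^{k+1}-x^k)\rangle$; applying Young with parameter $\tfrac{1}{2s}$ to the actual mismatch produces $\tfrac{1}{2s}\|y^*-y^{k+1}\|^2$, which destroys the $y$-telescoping rather than cancelling the residual $\tfrac{1}{2s}\|y^{k+1}-y^k\|^2$ as you claim.

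Both defects are repaired by re-tuning the Young parameters exactly as in Lemma~\ref{Lemma-PDHG}: since $\tfrac{1}{2\beta^{k+1}}=\tfrac{\sqrt{k+1}+L}{2}$, reserve $\tfrac{L}{2}$ for the descent-lemma term and split the surplus $\tfrac{\sqrt{k+1}}{2}$ into two pieces of $\tfrac{\sqrt{k+1}}{4}$ each; spend one on the noise cross term, paying $\tfrac{\|\delta^{k+1}\|^2}{\sqrt{k+1}}$ instead of $\tfrac{\beta^{k+1}}{2}\|\delta^{k+1}\|^2$ (same order after summation), and spend the other on the coupling term, bounding $\|y^*-y^{k+1}\|\le D_y$ by compactness of $\YCal$ and paying $\tfrac{\lambda_{\max}(F^\top F)D_y^2}{\sqrt{k+1}}$ per step. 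Both penalty sequences sum to $O(\sqrt{t+1})$, so the advertised rate survives; but as stated your coefficient identity is arithmetically false and the claimed cancellation against the $y$-residual is not available, so the proof needs this repair before it is complete.
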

We further present the high-probability result for uniformly averaged iterates under general convex objective functions in the following theorem.
\begin{theorem}\label{Theorem-PDHG-High-Probability}
Denote $\beta^{k+1}$ and $\alpha^{k+1}$ as shown in Table \ref{tab:result}, then for any stationary point $\left(y^*,x^*\right)$ of problem~\eqref{prob:N}, { $\left(\bar{y}^{t},\bar{x}^{t}\right)$ generated by Algorithm~\ref{ALG:SPDHG} converges to $\left(y^*,x^*\right)$ with $O(\frac{1}{\sqrt{t}})$ rate in the sense of high probability. More specifically, the following statement holds true,}
\begin{eqnarray*}
& & \Prob\left(P(y^*,\bar{x}^t) - P(\bar{y}^t,x^*) > \frac{D_y^2}{2s(t+1)} + \frac{L D_x^2}{2(t+1)} + \frac{D_x^2+2\lambda_{\max}(F^\top F)D_y^2}{\sqrt{t+1}} \right. \\
& & \left. \quad \quad  + \frac{2\sqrt{\Omega} D_x \sigma}{\sqrt{t+1}} + \frac{(1+\Omega) \sigma^2}{\sqrt{t+1}}\right)\leq 2\exp\left(-\Omega\right),
\end{eqnarray*}
for any $\Omega>0$. 
\end{theorem}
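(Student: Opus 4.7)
The strategy is to run the standard analysis of linearized PDHG to get a deterministic per-iteration inequality, then treat the two stochastic error terms produced by replacing $\nabla l(x^k)$ by $\nabla l(x^k,\xi^{k+1})$ through two separate concentration arguments, and finish with a union bound. The expectation statement of Theorem \ref{Theorem-PDHG-Convergence} serves as a template: here one essentially replaces ``take expectation'' by ``apply a tail bound'' at exactly two places.

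First, I would derive, for any $(y,x) \in \YCal \times \XCal$, a per-iteration inequality of the form
\begin{equation*}
P(y,x^{k+1}) - P(y^{k+1},x) \leq \frac{\|y-y^k\|^2 - \|y-y^{k+1}\|^2}{2s} + \frac{\|x-x^k\|^2 - \|x-x^{k+1}\|^2}{2\beta^{k+1}} + \frac{L}{2}\|x^{k+1}-x^k\|^2 + \langle \delta_k, x^k - x\rangle + \frac{\beta^{k+1}}{2}\|\delta_k\|^2,
\end{equation*}
with $\delta_k := \nabla l(x^k) - \nabla l(x^k,\xi^{k+1})$. This comes from combining the optimality of the $y$-subproblem (three-point identity), the projection inequality for the $x$-subproblem, the descent lemma from Assumption \ref{assumption_2}, and Young's inequality $\langle \delta_k, x^{k+1}-x^k\rangle \leq \tfrac{\beta^{k+1}}{2}\|\delta_k\|^2 + \tfrac{1}{2\beta^{k+1}}\|x^{k+1}-x^k\|^2$ to push the noise onto the past iterate. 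Multiplying by $\alpha^{k+1} = 1/(t+1)$, summing $k = 0,\ldots,t$, setting $(y,x)=(y^*,x^*)$, and invoking convexity/concavity of $P$ in its two arguments yields
\begin{equation*}
P(y^*,\bar{x}^t) - P(\bar{y}^t,x^*) \leq (\text{deterministic terms}) + T_1 + T_2, \quad T_1 := \frac{1}{t+1}\sum_{k=0}^t \langle \delta_k, x^*-x^k\rangle, \quad T_2 := \frac{1}{t+1}\sum_{k=0}^t \frac{\beta^{k+1}}{2}\|\delta_k\|^2.
\end{equation*}
Substituting $\beta^{k+1} = 1/(\sqrt{k+1}+L)$ and using $\sum_{k=0}^t 1/\sqrt{k+1} \leq 2\sqrt{t+1}$ converts the deterministic bracket into the first three additive terms of the theorem; in particular, the $\lambda_{\max}(F^\top F)D_y^2/\sqrt{t+1}$ contribution arises when the aggregated $\tfrac{L}{2}\sum \alpha^{k+1}\|x^{k+1}-x^k\|^2$ is bounded using $\|F^\top y^{k+1}\|^2 \leq \lambda_{\max}(F^\top F)D_y^2$.

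For $T_1$: since $\ex[\delta_k \mid \FCal_k] = 0$, the summands form a martingale difference sequence, and the exponential-moment bound in Assumption \ref{assumption_4} together with $\|x^*-x^k\|\leq D_x$ shows that each increment is conditionally sub-Gaussian with parameter at most $D_x\sigma$. An Azuma-Hoeffding-type inequality for sub-Gaussian martingales then gives
\begin{equation*}
\Prob\!\left(T_1 > \frac{2\sqrt{\Omega}\,D_x\sigma}{\sqrt{t+1}}\right) \leq \exp(-\Omega).
\end{equation*}
For $T_2$: Assumption \ref{assumption_4} makes each $\|\delta_k\|^2$ sub-exponential with mean $\leq \sigma^2$, so a Bernstein-type concentration for weighted sums of sub-exponential variables (with weights $\beta^{k+1}/(2(t+1))$) yields
\begin{equation*}
\Prob\!\left(T_2 > \frac{(1+\Omega)\sigma^2}{\sqrt{t+1}}\right) \leq \exp(-\Omega),
\end{equation*}
where the ``$1$'' is the mean contribution $\tfrac{1}{t+1}\sum_k \tfrac{\beta^{k+1}}{2}\sigma^2 \leq \sigma^2/\sqrt{t+1}$ and the ``$\Omega$'' is the deviation at the linear scale of the sub-exponential tail. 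A union bound over the two failure events gives the stated $2\exp(-\Omega)$.

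The hard part will be lining up the concentration constants exactly as stated: the martingale bound for $T_1$ requires care because $x^*-x^k$ is itself a function of past noise, forcing the Azuma argument to be applied conditionally iteration by iteration, and the sub-exponential bound for $T_2$ must be derived directly from the exponential-moment hypothesis $\ex[\exp(\|\delta_k\|^2/\sigma^2)]\le e$ rather than from a mere variance bound in order to produce the clean $(1+\Omega)$ coefficient. The deterministic per-iteration inequality itself is a routine adaptation of the linearized PDHG analysis underlying Theorem \ref{Theorem-PDHG-Convergence} and should transfer almost verbatim.
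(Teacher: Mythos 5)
Your proposal matches the paper's proof essentially step for step: the paper likewise sums the deterministic per-iteration inequality to obtain $P(y^*,\bar{x}^t)-P(\bar{y}^t,x^*)\le A_t+B_t+C_t$, bounds the martingale term $B_t$ via the sub-Gaussian martingale tail bound of Lemma~\ref{Lemma-High-Probability}(b) (your Azuma-type step, with the same $\sigma_k=D_x\sigma$ and $\Omega_1=2\sqrt{\Omega}$ substitution), controls $C_t$ by combining the exponential-moment hypothesis of Assumption~\ref{assumption_4} with Jensen's and Markov's inequalities (your sub-exponential step), and finishes with a union bound. The only cosmetic difference is that the paper's quadratic noise term carries weights $1/\sqrt{k+1}$ rather than $\beta^{k+1}/2$, which changes nothing in the final estimate.
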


\subsection{Convergence of uniformly averaging under strongly convex objective functions}
In this subsection, we present the convergence result in the following theorem when the objective function is further assumed to be strongly convex.
\begin{theorem}\label{Theorem-Strong-PDHG-Convergence}
Denote $\beta^{k+1}$ and $\alpha^{k+1}$ as shown in Table \ref{tab:result}, then for any stationary point $\left(y^*,x^*\right)$ of problem~\eqref{prob:N}, $\left(\bar{y}^{t},\bar{x}^{t}\right)$ generated by Algorithm~\ref{ALG:SPDHG} converges to $\left(y^*,x^*\right)$ with $O(\frac{\log(t)}{t})$ rate in expectation.
\end{theorem}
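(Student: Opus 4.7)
The plan is to follow the standard primal-dual saddle-point analysis and bound the duality gap $P(y^*,\bar{x}^t)-P(\bar{y}^t,x^*)$ in expectation. By convexity of $l$ and bilinearity of $\langle y,Fx\rangle$, Jensen's inequality applied through the uniform averaging weights $\alpha^{k+1}=1/(t+1)$ gives
\begin{displaymath}
P(y^*,\bar{x}^t)-P(\bar{y}^t,x^*)\;\le\;\frac{1}{t+1}\sum_{k=0}^{t}\bigl[P(y^*,x^{k+1})-P(y^{k+1},x^*)\bigr],
\end{displaymath}
so it suffices to bound the per-iteration gap at an arbitrary $(y,x)\in\YCal\times\XCal$, sum over $k$, and take expectations.

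The per-iteration inequality is assembled from three ingredients. First, the $y$-update \eqref{LPDHG_Update_Y} is a strongly concave proximal maximization whose first-order optimality yields the three-point identity $\langle y-y^{k+1},Fx^k\rangle\le\frac{1}{2s}(\|y-y^k\|^2-\|y-y^{k+1}\|^2-\|y^{k+1}-y^k\|^2)$. Second, the projection inequality for the $x$-update \eqref{SPDHG_Update_X} bounds $\langle\nabla l(x^k,\xi^{k+1})+F^\top y^{k+1},x^{k+1}-x\rangle$ in the same three-point form with parameter $\beta^{k+1}$. Third, the descent lemma (Assumption~\ref{assumption_2}) combined with the $\mu$-strong convexity of $l$ (Assumption~\ref{assumption_5}) gives $l(x^{k+1})-l(x)\le\langle\nabla l(x^k),x^{k+1}-x\rangle+\frac{L}{2}\|x^{k+1}-x^k\|^2-\frac{\mu}{2}\|x-x^k\|^2$. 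Adding these produces a one-step bound on $P(y,x^{k+1})-P(y^{k+1},x)$ whose only non-telescoping residues are the coupling term $\langle y-y^{k+1},F(x^{k+1}-x^k)\rangle$ and the stochastic error $\langle\nabla l(x^k)-\nabla l(x^k,\xi^{k+1}),x^{k+1}-x\rangle$.

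The coupling term is split by Young's inequality with parameter $\gamma_k\sim\mu(k+1)$, so that its $\|x^{k+1}-x^k\|^2$ contribution is absorbed into $-\frac{1}{2\beta^{k+1}}\|x^{k+1}-x^k\|^2$ (which is permissible because $1/\beta^{k+1}=\mu(k+1)+L$ comfortably dominates $L+\gamma_k$ with room also for the noise term below). The crucial algebraic identity driving the rate is
\begin{displaymath}
\frac{1}{2\beta^{k+1}}-\frac{\mu}{2}\;=\;\frac{\mu k+L}{2}\;=\;\frac{1}{2\beta^{k}},
\end{displaymath}
which makes the $\|x-x^k\|^2$ terms telescope exactly when summed over $k=0,\dots,t$; only the boundary piece $\frac{L}{2}\|x^*-x^0\|^2\le\frac{LD_x^2}{2}$ survives on the primal side, while the $y$-side telescopes to $\frac{D_y^2}{2s}$. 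The Young residue from the coupling term contributes $\sum_{k=0}^{t}\frac{\lambda_{\max}(F^\top F)D_y^2}{2\gamma_k}=O(\log t)$ via the harmonic sum, and this is exactly the piece that, after dividing by $t+1$, produces the logarithmic factor in the advertised rate.

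The main obstacle is the stochastic error, since $x^{k+1}$ depends on $\xi^{k+1}$ and the error is not mean-zero as written. I will use the standard decomposition
\begin{displaymath}
\langle\nabla l(x^k)-\nabla l(x^k,\xi^{k+1}),x^{k+1}-x^*\rangle=\langle\nabla l(x^k)-\nabla l(x^k,\xi^{k+1}),x^k-x^*\rangle+\langle\nabla l(x^k)-\nabla l(x^k,\xi^{k+1}),x^{k+1}-x^k\rangle;
\end{displaymath}
the first summand has zero conditional expectation given the history up to iteration $k$ by the unbiasedness clause of Assumption~\ref{assumption_4}, while the second is controlled by Young's inequality with a parameter $\gamma'_k\sim\mu(k+1)$, rejoining the $\|x^{k+1}-x^k\|^2$ budget and leaving a residual whose expectation is bounded by $\sigma^2/(2\gamma'_k)$ thanks to the variance bound in Assumption~\ref{assumption_4}. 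Summing yields a stochastic contribution of order $\sigma^2\log t$, and dividing by $t+1$ delivers the claimed $O(\log t/t)$ convergence in expectation at $(y,x)=(y^*,x^*)$.
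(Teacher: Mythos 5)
Your proposal is correct and follows essentially the same route as the paper's own proof: the same per-iteration saddle-point inequality assembled from the two optimality conditions plus smoothness and strong convexity, the same Young splittings with parameters scaling like $\mu(k+1)$, the same telescoping identity $\frac{1}{2\beta^{k+1}}-\frac{\mu}{2}=\frac{1}{2\beta^{k}}$, the same mean-zero/Young decomposition of the stochastic error, and the same harmonic sum producing the $\log t$ factor. The only slight imprecision is the remark that $1/\beta^{k+1}=\mu(k+1)+L$ ``comfortably dominates'' the absorbed terms ``with room to spare'' --- with the paper's choices $\gamma_k=\kappa_k=\mu(k+1)$ the coefficient of $\|x^{k+1}-x^k\|^2$ is exactly zero, so the budget is tight rather than slack, but the argument goes through unchanged.
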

We further present the high-probability result in the following theorem when the objective function is further assumed to be strongly convex.
\begin{theorem}\label{Theorem-Strong-PDHG-High-Probability}
Denote $\beta^{k+1}$ and $\alpha^{k+1}$ as shown in Table \ref{tab:result}, then for any stationary point $\left(y^*,x^*\right)$ of problem~\eqref{prob:N}, { $\left(\bar{y}^{t},\bar{x}^{t}\right)$ generated by Algorithm~\ref{ALG:SPDHG} converges to $\left(y^*,x^*\right)$ with $O(\frac{1}{\sqrt{t}}+\frac{\log(t)}{t})$ rate in the sense of high probability. More specifically, the following statement holds true,}
\begin{eqnarray*}
& & \Prob\left(P(y^*,\bar{x}^t) - P(\bar{y}^t,x^*) > \frac{D_y^2}{2s(t+1)} + \frac{L D_x^2}{2(t+1)} + \frac{\lambda_{\max}(F^\top F)D_y^2\log(t+1)}{\mu(t+1)} \right. \\
& & \left. \quad \quad  + \frac{2\sqrt{\Omega} D_x \sigma}{\sqrt{t+1}}  + \frac{(1+\Omega) \sigma^2 \log(t+1)}{2\mu(t+1)}\right) \leq 2\exp\left(-\Omega\right),
\end{eqnarray*}
for any $\Omega>0$. 
\end{theorem}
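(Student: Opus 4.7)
The plan is to mirror the argument behind Theorem \ref{Theorem-Strong-PDHG-High-Probability}'s expectation counterpart (Theorem \ref{Theorem-Strong-PDHG-Convergence}), but to isolate and concentrate the stochastic noise explicitly. First I would derive a per-iteration gap bound by combining four ingredients at step $k$: (i) the strong-concavity three-point inequality for the $y$-update in \eqref{LPDHG_Update_Y}, giving $\langle y^*-y^{k+1},Fx^k\rangle \le \frac{1}{2s}(\|y^k-y^*\|^2-\|y^{k+1}-y^*\|^2-\|y^{k+1}-y^k\|^2)$; (ii) the projection three-point inequality for \eqref{SPDHG_Update_X} applied with comparison point $x^*$, yielding $\langle \nabla l(x^k,\xi^{k+1})+F^\top y^{k+1}, x^{k+1}-x^*\rangle \le \frac{1}{2\beta^{k+1}}(\|x^k-x^*\|^2-\|x^{k+1}-x^*\|^2-\|x^{k+1}-x^k\|^2)$; (iii) the $L$-smooth upper bound $l(x^{k+1})\le l(x^k)+\langle \nabla l(x^k),x^{k+1}-x^k\rangle+\tfrac{L}{2}\|x^{k+1}-x^k\|^2$; and (iv) the $\mu$-strong convexity lower bound $l(x^k)-l(x^*)\le \langle \nabla l(x^k),x^k-x^*\rangle-\tfrac{\mu}{2}\|x^k-x^*\|^2$. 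After the $\langle F^\top y^{k+1},x^{k+1}-x^*\rangle$ cross terms cancel, I would expose the residual cross term $\langle y^*-y^{k+1},F(x^{k+1}-x^k)\rangle$ and split it via Young's inequality tuned to make the $\|x^{k+1}-x^k\|^2$ contributions absorb into the negative residues already available, trading them for a constant of order $\beta^{k+1}\lambda_{\max}(F^\top F)\|y^*-y^{k+1}\|^2\le \beta^{k+1}\lambda_{\max}(F^\top F)D_y^2$.

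Second, I would plug in $\beta^{k+1}=\tfrac{1}{\mu(k+1)+L}$ so that $\tfrac{1}{2\beta^{k+1}}\|x^k-x^*\|^2-\tfrac{\mu}{2}\|x^k-x^*\|^2=\tfrac{\mu k+L}{2}\|x^k-x^*\|^2$, which telescopes against $-\tfrac{\mu(k+1)+L}{2}\|x^{k+1}-x^*\|^2$ of the next step. Summing from $k=0$ to $t$, uniformly averaging with $\alpha^{k+1}=1/(t+1)$, and using the convex–concave structure $P(y^*,\bar x^t)-P(\bar y^t,x^*)\le \frac{1}{t+1}\sum_{k=0}^{t}[P(y^*,x^{k+1})-P(y^{k+1},x^*)]$, gives the three deterministic terms in the stated bound: $\tfrac{D_y^2}{2s(t+1)}$ from the $y$-telescope, $\tfrac{LD_x^2}{2(t+1)}$ from the $x$-telescope, and $\tfrac{\lambda_{\max}(F^\top F)D_y^2\log(t+1)}{\mu(t+1)}$ from $\sum_{k=0}^t\beta^{k+1}\le \tfrac{\log(t+1)}{\mu}$ plus constants.

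Third, I would handle the noise $\delta^{k+1}=\nabla l(x^k,\xi^{k+1})-\nabla l(x^k)$ that was hidden inside step (ii). Replacing $\nabla l(x^k,\xi^{k+1})$ by $\nabla l(x^k)+\delta^{k+1}$ introduces an extra error $\langle \delta^{k+1}, x^*-x^{k+1}\rangle$, which I would split as $\langle \delta^{k+1}, x^*-x^k\rangle+\langle \delta^{k+1}, x^k-x^{k+1}\rangle$. The second piece is bounded by $\tfrac{\beta^{k+1}}{2}\|\delta^{k+1}\|^2+\tfrac{1}{2\beta^{k+1}}\|x^k-x^{k+1}\|^2$, with the quadratic absorbed by a reserved share of the earlier $-\tfrac{1}{2\beta^{k+1}}\|x^{k+1}-x^k\|^2$. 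The first piece forms a martingale difference sequence in $k$ with $|\langle \delta^{k+1},x^*-x^k\rangle|$ sub-Gaussian with parameter $\sigma D_x$ by Assumption \ref{assumption_4}.

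Finally, I would invoke two concentration inequalities whose combined failure probability is at most $2\exp(-\Omega)$ by a union bound. A sub-Gaussian Azuma (or Bernstein-type) martingale inequality applied to $\tfrac{1}{t+1}\sum_k \langle \delta^{k+1},x^*-x^k\rangle$ produces the $\tfrac{2\sqrt{\Omega}D_x\sigma}{\sqrt{t+1}}$ term; and a weighted sub-exponential concentration inequality for $\sum_k \beta^{k+1}\|\delta^{k+1}\|^2$ (using the exponential moment bound in Assumption \ref{assumption_4}) yields the $\tfrac{(1+\Omega)\sigma^2\log(t+1)}{2\mu(t+1)}$ term, where the $\log(t+1)/\mu$ factor again comes from $\sum_k \beta^{k+1}$. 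The main obstacle I expect is the second concentration step: the weights $\beta^{k+1}$ are deterministic but nonuniform, so care is needed to apply a weighted sub-exponential Bernstein bound with the sharp $(1+\Omega)$ dependence, rather than a crude uniform-bound argument that would inflate the rate; correctly choosing the Young's coefficient that trades $\|x^{k+1}-x^k\|^2$ against $\beta^{k+1}\|\delta^{k+1}\|^2$ so the reserved negative residues suffice is the other delicate point in the algebra.
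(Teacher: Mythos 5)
Your proposal follows essentially the same route as the paper: the same per-iteration bound from the two optimality conditions plus smoothness and strong convexity, the same telescoping induced by $\beta^{k+1}=\frac{1}{\mu(k+1)+L}$, the same split of the noise into a martingale term $\sum_k\langle\delta^{k+1},x^*-x^k\rangle$ handled by the sub-Gaussian martingale bound (Lemma \ref{Lemma-High-Probability}) and a weighted squared-noise term $\sum_k\frac{\|\delta^{k+1}\|^2}{\mu(k+1)}$ handled by an exponential-moment Chernoff/Markov argument with normalized weights, combined by a union bound into $2\exp(-\Omega)$. The two ``delicate points'' you flag are exactly the ones the paper resolves (via the Young's parameter $\kappa=\mu(k+1)$ and the normalized weights $\eta_{k+1}$), so the plan is sound and matches the paper's proof.
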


\subsection{Convergence of non-uniformly averaging under strongly convex objective functions}
In this subsection, we present the convergence result for non-uniformly averaged iterates under strongly convex functions in the following theorem.
\begin{theorem}\label{Theorem-Strong-Nonuniform-PDHG}
Denote $\beta^{k+1}$ and $\alpha^{k+1}$ as shown in Table \ref{tab:result}, then for any stationary point $\left(y^*,x^*\right)$ of problem~\eqref{prob:N}, $\left(\bar{y}^{t},\bar{x}^{t}\right)$ generated by Algorithm~\ref{ALG:SPDHG} converges to $\left(y^*,x^*\right)$ with $O(\frac{1}{t})$ rate in expectation.
\end{theorem}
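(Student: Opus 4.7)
The plan is to derive a one-step inequality at each iteration $k$ that bounds $P(y, x^{k+1}) - P(y^{k+1}, x)$, multiply by $\alpha^{k+1}$, sum over $k=0,\ldots,t$, take expectation, and finally pass to the averaged iterates using the convex-concave structure of $P$. The choice $\beta^{k+1} = 2/(\mu(k+2)+2L)$ is engineered so that $\tfrac{1}{2\beta^{k+1}} - \tfrac{L}{2} = \tfrac{\mu(k+2)}{4}$, leaving a strong-convexity slack proportional to $k$ on $\|x^{k+1}-x^k\|^2$; this free slack is what we exploit to absorb both the $F$-coupling cross term and the variance of $\delta_k := \nabla l(x^k,\xi^{k+1}) - \nabla l(x^k)$, yielding the improved $O(1/t)$ rate rather than the $O(\log t/t)$ of uniform averaging.

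Combining the three-point identity from the $y$-update and the projected-gradient identity from the $x$-update (with $G_k = \nabla l(x^k,\xi^{k+1}) + F^\top y^{k+1}$) with the strong-convexity inequality $\langle \nabla l(x^k), x^k - x\rangle \geq l(x^k)-l(x)+\tfrac{\mu}{2}\|x^k-x\|^2$ and the descent lemma for $L$-smooth $l$, and writing $\langle y-y^{k+1}, Fx^{k+1}\rangle = \langle y-y^{k+1}, Fx^k\rangle + \langle F^\top(y-y^{k+1}), x^{k+1}-x^k\rangle$, I arrive at
\begin{align*}
P(y,x^{k+1}) - P(y^{k+1},x)
&\le \tfrac{\mu k+2L}{4}\|x^k-x\|^2 - \tfrac{\mu(k+2)+2L}{4}\|x^{k+1}-x\|^2 - \tfrac{\mu(k+2)}{4}\|x^{k+1}-x^k\|^2 \\
&\quad + \tfrac{1}{2s}\bigl(\|y-y^k\|^2 - \|y-y^{k+1}\|^2\bigr) + \langle F^\top(y-y^{k+1}),\, x^{k+1}-x^k\rangle \\
&\quad + \langle \delta_k,\, x^k - x^{k+1}\rangle + \langle \delta_k,\, x - x^k\rangle.
\end{align*}
Two applications of Young's inequality, each with weight $\mu(k+2)/4$ on $\|x^{k+1}-x^k\|^2$, eliminate both the $F$-cross and the $\delta_k$-cross terms at the cost of $\tfrac{2\lambda_{\max}(F^\top F)}{\mu(k+2)}\|y-y^{k+1}\|^2$ and $\tfrac{2}{\mu(k+2)}\|\delta_k\|^2$.

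Multiplying by $\alpha^{k+1} = 2(k+1)/((t+1)(t+2))$ and summing, the key algebraic identity
$\alpha^{k+1}\tfrac{\mu(k+2)+2L}{4} - \alpha^{k+2}\tfrac{\mu(k+1)+2L}{4} = -L/((t+1)(t+2))$
telescopes the $x$-distance terms up to total slack $O(LD_x^2/t)$. Because the weights $\alpha^{k+1}$ are increasing in $k$, summation by parts yields $\sum_k \alpha^{k+1}(\|y-y^k\|^2-\|y-y^{k+1}\|^2) \le 2D_y^2/(t+2)$, so the $y$-distance sum contributes $O(D_y^2/(st))$; the $F$-cross contribution sums to $O(\lambda_{\max}(F^\top F)D_y^2/(\mu t))$ since $\alpha^{k+1}/(k+2) = O(1/t^2)$. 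Taking conditional expectation with respect to $\xi^{k+1}$ kills the linear noise term $\ex[\alpha^{k+1}\langle \delta_k, x-x^k\rangle] = 0$, while the quadratic noise contributes $\sum_k \alpha^{k+1}\tfrac{2}{\mu(k+2)}\ex\|\delta_k\|^2 = O(\sigma^2/(\mu t))$ by Assumption~\ref{assumption_4}. Finally, convexity of $P$ in $x$ together with linearity in $y$ yields $P(y,\bar{x}^t) - P(\bar{y}^t,x) \le \sum_k \alpha^{k+1}(P(y,x^{k+1}) - P(y^{k+1},x))$, and evaluating at $(y^*, x^*)$ delivers the claimed $O(1/t)$ rate in expectation.

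The main obstacle is arranging the bookkeeping so that every error source ends up $O(1/t)$ simultaneously. The $x$-telescope slack $L/((t+1)(t+2))$ per iteration sums to $O(L/t)$ only because the step-size $\beta^{k+1}$ decays like $1/k$ while the weight $\alpha^{k+1}$ grows like $k/t^2$, so both the cross-coupling term $\lambda_{\max}(F^\top F)/(\mu(k+2))$ and the variance term $\sigma^2/(\mu(k+2))$ become $O(1/t^2)$ per iteration and thus $O(1/t)$ overall. Any looser scaling of $\beta^{k+1}$ would reintroduce a $\log t$ factor as in the uniform-weight case, and a smaller $\alpha^{k+1}$ would break the summation-by-parts bound on the $y$-distance terms.
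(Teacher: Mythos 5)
Your proof is correct and follows essentially the same route as the paper: the same one-step inequality from the two optimality conditions plus strong convexity and the descent lemma, the same absorption of the $F$-coupling and noise cross terms into the $\tfrac{\mu(k+2)}{4}\|x^{k+1}-x^k\|^2$ slack via Young's inequality, and the same weighted telescoping under $\alpha^{k+1}=\tfrac{2(k+1)}{(t+1)(t+2)}$ (the paper writes the telescope as $(k+2)(k+1)\|x^*-x^{k+1}\|^2-(k+1)k\|x^*-x^k\|^2$, which is your identity in disguise). One trivial slip: you say each Young application uses weight $\tfrac{\mu(k+2)}{4}$ on $\|x^{k+1}-x^k\|^2$, but the costs you then state ($\tfrac{2\lambda_{\max}(F^\top F)}{\mu(k+2)}$ and $\tfrac{2}{\mu(k+2)}$) correspond to weight $\tfrac{\mu(k+2)}{8}$ each, which is what the budget actually allows; with that correction everything balances.
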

We further present the high-probability result for non-uniformly averaged iterates under strongly convex functions in the following theorem.
\begin{theorem}\label{Theorem-Strong-Nonuniform-PDHG-High-Probability}
Denote $\beta^{k+1}$ and $\alpha^{k+1}$ as shown in Table \ref{tab:result}, then for any stationary point $\left(y^*,x^*\right)$ of problem~\eqref{prob:N}, { $\left(\bar{y}^{t},\bar{x}^{t}\right)$  generated by Algorithm~\ref{ALG:SPDHG} converges to $\left(y^*,x^*\right)$ with $O(\frac{1}{\sqrt{t}}+\frac{1}{t})$ rate in the sense of high probability. More specifically, the following statement holds true,}
\begin{eqnarray*}
& & \Prob\left(P(y^*,\bar{x}^t) - P(\bar{y}^t,x^*) > \frac{D_y^2}{s(t+2)} + \frac{L D_x^2}{t+2} + \frac{4\lambda_{\max}(F^\top F)D_y^2}{\mu(t+2)} + \frac{2\sqrt{2\Omega} D_x \sigma}{\sqrt{t+2}} \right. \\
& & \left. \quad \quad   + \frac{4(1+\Omega) \sigma^2}{\mu(t+2)}\right) \leq 2\exp\left(-\Omega\right),
\end{eqnarray*}
for any $\Omega>0$. 
\end{theorem}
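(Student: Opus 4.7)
The plan is to build on the machinery underlying the expectation statement of Theorem \ref{Theorem-Strong-Nonuniform-PDHG} and upgrade it by applying two standard concentration tools to the stochastic residuals that emerge from the per-iteration analysis. First I would derive the usual LPDHG-style per-iteration inequality: combining the prox optimality condition for \eqref{LPDHG_Update_Y} with the projected stochastic-gradient step \eqref{SPDHG_Update_X}, together with the $L$-smoothness (Assumption \ref{assumption_2}) and $\mu$-strong convexity (Assumption \ref{assumption_5}) of $l$, yields for every $(y,x)\in\YCal\times\XCal$ a bound of the form
$$P(y,x^{k+1}) - P(y^{k+1},x) \leq \frac{\|y-y^k\|^2-\|y-y^{k+1}\|^2}{2s} + \left(\frac{1}{2\beta^{k+1}}-\frac{\mu}{2}\right)\|x-x^k\|^2 - \frac{\|x-x^{k+1}\|^2}{2\beta^{k+1}} + E_k(x),$$
where $E_k(x)$ collects the stochastic residuals, namely $\langle \delta^{k+1}, x-x^k\rangle$ and $\beta^{k+1}\|\delta^{k+1}\|^2$-type terms with $\delta^{k+1} = \nabla l(x^k)-\nabla l(x^k,\xi^{k+1})$, plus a $\lambda_{\max}(F^\top F)\|y^{k+1}\|^2$ contribution from expanding the squared projection term.

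Next I would multiply through by $\alpha^{k+1} = 2(k+1)/((t+1)(t+2))$ and sum over $k=0,\ldots,t$. The step-size $\beta^{k+1} = 2/(\mu(k+2)+2L)$ is tailored so that the coefficients $\alpha^{k+1}(1/(2\beta^{k+1})-\mu/2)$ and $\alpha^{k}/(2\beta^{k})$ produce an exact telescoping; combining this with convexity of $P(\cdot,x)$ in $y$, concavity of $P(y,\cdot)$ in $x$, and the definition of $\bar{y}^t,\bar{x}^t$ produces the deterministic skeleton
$$P(y^*,\bar{x}^t) - P(\bar{y}^t,x^*) \leq \frac{D_y^2}{s(t+2)} + \frac{LD_x^2}{t+2} + \frac{4\lambda_{\max}(F^\top F)D_y^2}{\mu(t+2)} + S_t^{(1)} + S_t^{(2)},$$
with residuals $S_t^{(1)} = \sum_{k=0}^t \alpha^{k+1}\langle\delta^{k+1}, x^k-x^*\rangle$ and $S_t^{(2)} = \sum_{k=0}^t \alpha^{k+1}\beta^{k+1}\|\delta^{k+1}\|^2$. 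Taking expectation kills $S_t^{(1)}$ as a martingale and bounds $\ex S_t^{(2)}$ by $4\sigma^2/(\mu(t+2))$, reproducing the expectation rate of Theorem \ref{Theorem-Strong-Nonuniform-PDHG}; the high-probability claim instead requires pointwise tail bounds on both residuals.

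For $S_t^{(1)}$, $\{\alpha^{k+1}\langle\delta^{k+1},x^k-x^*\rangle\}$ is a martingale difference sequence whose increments are conditionally sub-Gaussian of scale proportional to $\alpha^{k+1}D_x\sigma$ (the exp-moment bound of Assumption \ref{assumption_4} implies that $\langle\delta^{k+1},u\rangle$ is sub-Gaussian with parameter $c\sigma\|u\|$ for any $\FCal_k$-measurable $u$). Combining Azuma's inequality for sub-Gaussian martingales with $\sum_{k=0}^t(\alpha^{k+1})^2 = O(1/(t+2))$ yields $\Prob(S_t^{(1)} > 2\sqrt{2\Omega}\,D_x\sigma/\sqrt{t+2}) \leq \exp(-\Omega)$. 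For $S_t^{(2)}$, each $\|\delta^{k+1}\|^2/\sigma^2$ is conditionally sub-exponential with mean at most one by Assumption \ref{assumption_4}; feeding the non-negative sequence $\{\|\delta^{k+1}\|^2\}$ into a Bernstein/Chernoff bound for a weighted sum with deterministic coefficients $c_k=\alpha^{k+1}\beta^{k+1}$ (whose sum is bounded above by $4/(\mu(t+2))$) delivers $\Prob(S_t^{(2)} > 4(1+\Omega)\sigma^2/(\mu(t+2))) \leq \exp(-\Omega)$. A union bound over the two events yields the claimed inequality with probability of failure at most $2\exp(-\Omega)$.

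The main obstacle will be the concentration of $S_t^{(2)}$: the exp-moment assumption does not directly give a Bernstein-type tail for a weighted sum with non-constant coefficients, and extracting the clean linear-in-$\Omega$ factor requires carefully optimizing the Chernoff free parameter against $\sum_k c_k$ rather than against $\max_k c_k$. A secondary technical point is verifying that the non-uniform weights $\alpha^{k+1}$ together with the specific step-sizes $\beta^{k+1}$ really do yield an \emph{exact} telescoping in the aggregation step, with no slack that would introduce a spurious $\log(t+2)$ factor; this verification is inherited from the expectation analysis of Theorem \ref{Theorem-Strong-Nonuniform-PDHG} but must be carried over verbatim because the high-probability step provides no extra headroom to absorb logarithmic losses.
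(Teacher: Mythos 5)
Your proposal follows essentially the same route as the paper: the same decomposition into a deterministic term $A_t$, a martingale term $B_t=S_t^{(1)}$ handled by the sub-Gaussian martingale bound (the paper's Lemma \ref{Lemma-High-Probability}, i.e.\ Nemirovski et al.'s Proposition 3.2, with $\zeta_k=(k+1)(x^*-x^k)^\top\delta^{k+1}$ and $\sigma_k=(k+1)D_x\sigma$), and a squared-noise term $C_t=S_t^{(2)}$ handled by a Chernoff--Markov bound on the exponential moment, followed by a union bound. The one obstacle you flag for $S_t^{(2)}$ in fact dissolves here: with $\alpha^{k+1}=\frac{2(k+1)}{(t+1)(t+2)}$ the per-term weight $\alpha^{k+1}\cdot\frac{2}{\mu(k+1)}=\frac{4}{\mu(t+1)(t+2)}$ is constant in $k$, so the paper simply takes uniform convex weights $\eta_{k+1}=\frac{1}{t+1}$, applies $\ex[\exp(\eta X)]\le(\ex[\exp(X)])^{\eta}$ termwise, and concludes by Markov's inequality exactly as you describe.
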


\section{Technical Proofs}
In this section, we provide the detailed technical proof of a list of lemmas and theorems mentioned above. 
\subsection{Key lemma for high-probability analysis}
In order to make our analysis complete, we present a key technical lemma (Proposition 3.2) in \cite{Nemirovski-2009-Robust}, which is important to our iteration complexity with high probability. 
\begin{lemma}\label{Lemma-High-Probability}
For $1\leq k \leq t$, let $\zeta_k$ be a deterministic function of $\xi_{1:k}$ and satisfies that $\ex\left[\zeta_k | \xi_{\left[1:k-1\right]}\right]=0$ and $\ex\left[\exp\left(\frac{\zeta_k^2}{\sigma_k^2}\right) | \xi_{\left[1:k-1\right]}\right] \leq \exp(1)$. Then the following statements hold true, 
\begin{enumerate}
\item[(a)] Let $\gamma\geq 0$ and $1 \leq k \leq t$, then $\ex\left[\exp\left(\gamma\zeta_k\right) \ | \ \xi_{\left[1:k-1\right]}\right] \leq \exp(\gamma^2\sigma_k^2)$. 
\item[(b)] Let $S_t=\sum_{k=1}^t \zeta_k$, then $\Prob(S_t\geq\Omega\sqrt{\sum_{k=1}^t \sigma_k^2})\leq\exp\left(-\frac{\Omega^2}{4}\right)$. 
\end{enumerate}
\end{lemma}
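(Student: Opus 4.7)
\textbf{Proof plan for Lemma~\ref{Lemma-High-Probability}.} My plan is to establish part (a) as a sub-Gaussian moment-generating-function bound derived from the exponential second-moment hypothesis, and then feed it into a standard Chernoff/tower-property argument for part (b). Throughout, all manipulations are performed under the conditional expectation given $\xi_{\left[1:k-1\right]}$, which I suppress below for brevity.

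For part (a), the plan is to split into two regimes based on the size of $\gamma\sigma_k$. In the small-parameter regime $\gamma\sigma_k\leq 1$, I would apply the elementary pointwise inequality $e^u\leq u+e^{u^2}$ (which is easy to verify on all of $\mathbb{R}$) with $u=\gamma\zeta_k$. Taking conditional expectation kills the linear term via $\ex[\zeta_k\mid\xi_{\left[1:k-1\right]}]=0$, leaving
\[
\ex\bigl[e^{\gamma\zeta_k}\bigr]\;\leq\;\ex\bigl[e^{\gamma^2\zeta_k^2}\bigr]\;=\;\ex\Bigl[\bigl(e^{\zeta_k^2/\sigma_k^2}\bigr)^{\gamma^2\sigma_k^2}\Bigr].
\]
Since $\gamma^2\sigma_k^2\leq 1$, the map $u\mapsto u^{\gamma^2\sigma_k^2}$ is concave, so Jensen combined with the hypothesis $\ex[e^{\zeta_k^2/\sigma_k^2}]\leq e$ gives exactly $e^{\gamma^2\sigma_k^2}$. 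In the large-parameter regime $\gamma\sigma_k\geq 1$, I would instead use Young's inequality $\gamma\zeta_k\leq \gamma^2\sigma_k^2/2+\zeta_k^2/(2\sigma_k^2)$, pull out the deterministic factor $e^{\gamma^2\sigma_k^2/2}$, and bound $\ex[e^{\zeta_k^2/(2\sigma_k^2)}]\leq \sqrt{\ex[e^{\zeta_k^2/\sigma_k^2}]}\leq e^{1/2}$ by Jensen. Combining $e^{\gamma^2\sigma_k^2/2}\cdot e^{1/2}$ with the regime constraint $\gamma^2\sigma_k^2\geq 1$ absorbs the additive $1/2$ into $\gamma^2\sigma_k^2/2$ and yields the desired $e^{\gamma^2\sigma_k^2}$.

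For part (b), the plan is the standard exponential Markov (Chernoff) bound together with iterated conditioning. For any $\lambda>0$ one has $\Prob(S_t\geq a)\leq e^{-\lambda a}\,\ex[e^{\lambda S_t}]$, and by the tower property together with part (a) applied to the innermost conditional expectation,
\[
\ex\bigl[e^{\lambda S_t}\bigr]\;=\;\ex\Bigl[e^{\lambda S_{t-1}}\,\ex\bigl[e^{\lambda\zeta_t}\,\big|\,\xi_{\left[1:t-1\right]}\bigr]\Bigr]\;\leq\;e^{\lambda^2\sigma_t^2}\,\ex\bigl[e^{\lambda S_{t-1}}\bigr].
\]
A short downward induction to $k=1$ then gives $\ex[e^{\lambda S_t}]\leq \exp\bigl(\lambda^2\sum_{k=1}^t\sigma_k^2\bigr)$. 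Setting $a=\Omega\sqrt{\sum_{k=1}^t\sigma_k^2}$ and optimizing $\lambda=\Omega/(2\sqrt{\sum_{k=1}^t\sigma_k^2})$ recovers the advertised tail bound $\exp(-\Omega^2/4)$.

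The main obstacle is squarely part (a): the two regimes must meet cleanly at $\gamma\sigma_k=1$ with the correct constant $1$ in front of $\gamma^2\sigma_k^2$, so both the threshold and the Young split must be chosen precisely so that the additive $1/2$ produced in the large-parameter case is absorbable. Once part (a) is in place, part (b) is a routine Chernoff computation with no further delicate steps.
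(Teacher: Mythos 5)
The paper never proves this lemma at all --- it imports it verbatim as Proposition 3.2 of Nemirovski et al.~\cite{Nemirovski-2009-Robust} --- so there is no in-paper argument to compare against; your proof is correct and is essentially the standard argument from that reference. Both halves check out: the two-regime bound on the conditional moment generating function (the pointwise inequality $e^u\leq u+e^{u^2}$ plus Jensen's inequality for $\gamma\sigma_k\leq 1$, and Young's inequality plus Jensen for $\gamma\sigma_k\geq 1$, with the additive $\tfrac12$ absorbed because $\gamma^2\sigma_k^2\geq 1$ in that regime) and the Chernoff/tower-property recursion with the optimized choice $\lambda=\Omega/(2\sqrt{\sum_{k=1}^t\sigma_k^2})$. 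The only cosmetic caveat is that part (b) as stated is meaningful only for $\Omega\geq 0$ (so that the optimizing $\lambda$ is nonnegative), which is the regime in which the lemma is applied.
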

\subsection{Convergence of uniformly averaging under convex objectives}
In this subsection, we analyze the convergence property of the SPDHG algorithm with uniformly averaged iterates for general convex objectives.
\begin{lemma}\label{Lemma-PDHG}
Let $(y^{k+1},x^{k+1})$ be generated by Algorithm \ref{ALG:SPDHG}, and $\beta^{k+1}$ and $\alpha^{k+1}$ be shown in Table \ref{tab:result}. For any stationary point $\left(y^*,x^*\right)$ of problem~\eqref{prob:N}, it holds that
\begin{eqnarray}\label{Inequality-Lemma-PDHG}
0 & \geq & \ex\left[ P(y^{k+1},x^*) - P(y^*,x^{k+1})\right] \\
& \geq & \frac{1}{2s}\left(\ex\left\| y^*-y^{k+1}\right\|^2-\ex\left\| y^*-y^k\right\|^2\right)  - \frac{\lambda_{\max}(F^\top F)D_y^2+\sigma^2}{\sqrt{k+1}} \nonumber \\
& &+ \frac{\sqrt{k+1}+L}{2}\left( \ex\left\| x^*-x^{k+1}\right\|^2-\ex\left\| x^*-x^k\right\|^2 \right). \nonumber
\end{eqnarray}
\end{lemma}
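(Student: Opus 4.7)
\textbf{Proof proposal for Lemma~\ref{Lemma-PDHG}.}

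The upper bound $\ex[P(y^{k+1},x^*)-P(y^*,x^{k+1})]\le 0$ is immediate from the saddle-point property: since $(y^*,x^*)$ is a stationary point of~\eqref{prob:N}, $P(y^{k+1},x^*)\le P(y^*,x^*)\le P(y^*,x^{k+1})$ pathwise. So the real work is the lower bound. The plan is to peel off two first-order optimality conditions (one for each subproblem), glue them together with the smoothness of $l$ and the bilinear coupling $\langle y,Fx\rangle$, and finally control the stochastic noise with a martingale-style decomposition.

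First I would write down the optimality conditions for \eqref{LPDHG_Update_Y} and \eqref{SPDHG_Update_X} at $y=y^*$ and $x=x^*$ respectively, using the three-point identity $2\langle a-b,\,b-c\rangle = \|a-c\|^2-\|a-b\|^2-\|b-c\|^2$. This yields
\begin{align*}
\langle Fx^k,\,y^{k+1}-y^*\rangle &\ge \tfrac{1}{2s}\bigl(\|y^*-y^{k+1}\|^2-\|y^*-y^k\|^2+\|y^{k+1}-y^k\|^2\bigr),\\
\langle \nabla l(x^k,\xi^{k+1})+F^\top y^{k+1},\,x^*-x^{k+1}\rangle &\ge \tfrac{1}{2\beta^{k+1}}\bigl(\|x^*-x^{k+1}\|^2-\|x^*-x^k\|^2+\|x^k-x^{k+1}\|^2\bigr).
\end{align*}
Next I use convexity plus $L$-smoothness of $l$ in the standard form $l(x^{k+1})-l(x^*)\le\langle\nabla l(x^k),x^{k+1}-x^*\rangle+\tfrac{L}{2}\|x^{k+1}-x^k\|^2$, and expand
\[
P(y^{k+1},x^*)-P(y^*,x^{k+1}) = l(x^*)-l(x^{k+1})-\langle F^\top y^{k+1},x^{k+1}-x^*\rangle+\langle y^{k+1}-y^*,Fx^{k+1}\rangle.
\]
Splitting $\langle y^{k+1}-y^*,Fx^{k+1}\rangle=\langle y^{k+1}-y^*,Fx^k\rangle+\langle F^\top(y^{k+1}-y^*),x^{k+1}-x^k\rangle$ and substituting the two optimality inequalities gives a bound of the desired telescoping form, plus two leftover error terms: the bilinear residual $\langle F^\top(y^{k+1}-y^*),x^{k+1}-x^k\rangle$ and the stochastic bias $\langle\delta^{k+1},x^*-x^{k+1}\rangle$ where $\delta^{k+1}=\nabla l(x^k,\xi^{k+1})-\nabla l(x^k)$. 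Using $1/\beta^{k+1}=\sqrt{k+1}+L$, the quadratic budget $\frac{1}{2\beta^{k+1}}\|x^{k+1}-x^k\|^2-\frac{L}{2}\|x^{k+1}-x^k\|^2=\frac{\sqrt{k+1}}{2}\|x^{k+1}-x^k\|^2$ is what must absorb these errors.

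The main obstacle, and the step that needs the sharpest bookkeeping, is balancing the Young's-inequality constants so that exactly the budget $\tfrac{\sqrt{k+1}}{2}\|x^{k+1}-x^k\|^2$ is consumed and no more. I would split the budget in half: apply Young's with parameter $\tfrac{\sqrt{k+1}}{2}$ to each of the two error terms, getting
\[
\langle F^\top(y^{k+1}-y^*),x^{k+1}-x^k\rangle \ge -\tfrac{1}{\sqrt{k+1}}\|F^\top(y^{k+1}-y^*)\|^2 - \tfrac{\sqrt{k+1}}{4}\|x^{k+1}-x^k\|^2,
\]
and the same for $\langle\delta^{k+1},x^{k+1}-x^k\rangle$. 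The first contributes at most $\tfrac{\lambda_{\max}(F^\top F)D_y^2}{\sqrt{k+1}}$ after bounding $\|y^{k+1}-y^*\|\le D_y$, matching the target constant.

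For the noise, I would decompose $\langle\delta^{k+1},x^*-x^{k+1}\rangle=\langle\delta^{k+1},x^*-x^k\rangle-\langle\delta^{k+1},x^{k+1}-x^k\rangle$ so that the first piece is zero-mean conditional on $\mathcal{F}_k=\sigma(\xi^1,\ldots,\xi^k)$ (since $x^k\in\mathcal{F}_k$ and $\ex[\delta^{k+1}\mid\mathcal{F}_k]=0$ by Assumption~\ref{assumption_4}), while the second piece is controlled by the Young's split above, producing the term $\tfrac{1}{\sqrt{k+1}}\|\delta^{k+1}\|^2$ whose expectation is at most $\tfrac{\sigma^2}{\sqrt{k+1}}$. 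After taking expectations, discarding the nonnegative $\tfrac{1}{2s}\ex\|y^{k+1}-y^k\|^2$, and recognizing the resulting telescoping structure, the claimed inequality falls out with the combined error $\tfrac{\lambda_{\max}(F^\top F)D_y^2+\sigma^2}{\sqrt{k+1}}$.
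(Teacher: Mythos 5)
Your proposal is correct and follows essentially the same route as the paper's proof: the same two first-order optimality conditions with the three-point identity, the same splitting of the bilinear coupling through $Fx^k$, the same decomposition of the noise via $\delta^{k+1}$ into a conditionally zero-mean part $\langle\delta^{k+1},x^*-x^k\rangle$ plus a Young-controlled part, and the same allocation of the $\tfrac{\sqrt{k+1}}{2}\|x^{k+1}-x^k\|^2$ budget in two halves with parameter $\sqrt{k+1}$. The only cosmetic difference is that the paper cites its smoothness-plus-convexity estimate as Lemma 6.2 of Gao et al.\ rather than deriving it, and leaves the upper bound $\ex[P(y^{k+1},x^*)-P(y^*,x^{k+1})]\leq 0$ unjustified, which you correctly supply via the saddle-point property.
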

\begin{proof}
For any optimal solution $\left(y^*,x^*\right)$ of problem~\eqref{prob:N}, the first-order optimality conditions for Eq.~\eqref{LPDHG_Update_Y} and Eq.~\eqref{SPDHG_Update_X} are
\begin{align*}
0 \leq & \left( y^* - y^{k+1}\right)^\top\left(- Fx^k + \frac{1}{s} \left( y^{k+1} - y^k \right)\right), \\
0 \leq & \left( x^* - x^{k+1}\right)^\top\left[ x^{k+1} - x^k + \beta^{k+1}\left(\nabla l(x^k,\xi^{k+1}) + F^\top y^{k+1} \right)\right],
\end{align*}
{ which implies that
\begin{eqnarray}\label{inequality-key-PDHG}
& & \left( x^*-x^{k+1}\right)^\top\nabla l(x^k,\xi^{k+1})  - \left(y^*-y^{k+1}\right)^\top Fx^{k+1} \nonumber \\
& & + \left(x^*-x^{k+1}\right)^\top F^\top y^{k+1} \\
& \geq & \frac{1}{\beta^{k+1}}\left( x^* - x^{k+1}\right)^\top\left(x^k - x^{k+1}\right) + \frac{1}{s}\left( y^* - y^{k+1}\right)^\top\left( y^k - y^{k+1} \right) \nonumber \\
& &  + \left(y^*-y^{k+1}\right)^\top \left( Fx^k - Fx^{k+1} \right) \nonumber \\
& \geq & \frac{1}{2\beta^{k+1}}\left( \left\| x^*-x^{k+1}\right\|^2-\left\| x^*-x^k\right\|^2+\left\| x^{k+1}-x^k\right\|^2\right) \nonumber \\
& & + \frac{1}{2s}\left( \left\| y^*-y^{k+1}\right\|^2-\left\| y^*-y^k\right\|^2\right) + \left(y^*-y^{k+1}\right)^\top \left( Fx^k - Fx^{k+1} \right).\nonumber 
\end{eqnarray}
By using Young's inequality, we have
\begin{eqnarray}
& & \left(y^*-y^{k+1}\right)^\top \left( Fx^k - Fx^{k+1}\right) \label{inequality-repeatable1} \\ 
& = & \left(F^\top y^* - F^\top y^{k+1}\right)^\top \left( x^k - x^{k+1}\right) \nonumber \geq - \frac{\lambda_{\max}(F^\top F)D_y^2}{\gamma} - \frac{\gamma}{4}\left\| x^k - x^{k+1} \right\|^2, \nonumber 
\end{eqnarray}
for any $\gamma > 0$,} and
\begin{eqnarray*}
& & \left( x^*-x^{k+1}\right)^\top\nabla l(x^k,\xi^{k+1}) \\
& = & \left( x^*-x^{k+1}\right)^\top\nabla l(x^k) + \left( x^*-x^{k+1}\right)^\top\delta^{k+1} \\
& \leq & l(x^*)-l(x^{k+1})+\frac{L}{2}\left\| x^k - x^{k+1}\right\|^2 + \left( x^*-x^{k+1}\right)^\top\delta^{k+1} \\
& \leq & l(x^*)-l(x^{k+1}) + \left( x^*-x^k\right)^\top\delta^{k+1} + \frac{L+\sqrt{k+1}/2}{2}\left\| x^k - x^{k+1}\right\|^2 + \frac{\left\|\delta^{k+1}\right\|^2}{\sqrt{k+1}},
\end{eqnarray*}
where $\delta^{k+1}=\nabla l(x^k,\xi^{k+1})-\nabla l(x^k)$, and the first inequality holds due to Lemma 6.2 in \cite{Gao-2014-Information}. Then by letting $\gamma = \sqrt{k + 1}$ in Eq.~\eqref{inequality-repeatable1}, we obtain
\begin{eqnarray}\label{Inequality-uniform-convex}
& & l(x^*)-l(x^{k+1}) + \left(\begin{array}{c} y^* - y^{k+1} \\ x^* - x^{k+1}\end{array} \right)^{\top}\left(\begin{array}{c} - Fx^{k+1} \\ F^\top y^{k+1} \end{array} \right) \\
& \geq & \left( x^*-x^{k+1}\right)^\top\nabla l(x^k,\xi^{k+1}) - \left( x^*-x^k\right)^\top\delta^{k+1} - \frac{L+\sqrt{k+1}/2}{2}\left\| x^k - x^{k+1}\right\|^2 - \frac{\left\|\delta^{k+1}\right\|^2}{\sqrt{k+1}} \nonumber \\
& & - \left(y^*-y^{k+1}\right)^\top Fx^{k+1} + \left(x^*-x^{k+1}\right)^\top F^\top y^{k+1} \nonumber \\
& \geq & \frac{1}{2\beta^{k+1}}\left( \left\| x^*-x^{k+1}\right\|^2-\left\| x^*-x^k\right\|^2\right) + \frac{1}{2s}\left( \left\| y^*-y^{k+1}\right\|^2-\left\| y^*-y^k\right\|^2\right) - \left( x^*-x^k\right)^\top\delta^{k+1}  \nonumber \\
& & + \left(y^*-y^{k+1}\right)^\top \left( Fx^k - Fx^{k+1} \right) - \frac{\left\|\delta^{k+1}\right\|^2}{\sqrt{k+1}} + \left(\frac{1}{2\beta^{k+1}}-\frac{L+\sqrt{k+1}/2}{2}\right) \left\| x^k - x^{k+1}\right\|^2 \nonumber \\ 
& \geq & \frac{1}{2\beta^{k+1}}\left( \left\| x^*-x^{k+1}\right\|^2-\left\| x^*-x^k\right\|^2\right) + \frac{1}{2s}\left( \left\| y^*-y^{k+1}\right\|^2-\left\| y^*-y^k\right\|^2\right) - \left( x^*-x^k\right)^\top\delta^{k+1} \nonumber \\
& & - \frac{\lambda_{\max}(F^\top F)D_y^2}{\sqrt{k+1}} - \frac{\left\|\delta^{k+1}\right\|^2}{\sqrt{k+1}} + \left(\frac{1}{2\beta^{k+1}}-\frac{L+\sqrt{k+1}/2}{2}-\frac{\sqrt{k+1}}{4}\right) \left\| x^k - x^{k+1}\right\|^2 \nonumber \\ 
& = & \frac{1}{2\beta^{k+1}}\left( \left\| x^*-x^{k+1}\right\|^2-\left\| x^*-x^k\right\|^2 \right) + \frac{1}{2s}\left( \left\| y^*-y^{k+1}\right\|^2-\left\| y^*-y^k\right\|^2\right) - \left( x^*-x^k\right)^\top\delta^{k+1} \nonumber \\
& & - \frac{\lambda_{\max}(F^\top F)D_y^2}{\sqrt{k+1}} - \frac{\left\|\delta^{k+1}\right\|^2}{\sqrt{k+1}}. \nonumber
\end{eqnarray}
Since $x^k$ and $y^k$ are independent of $\xi^{k+1}$, 
we take the expectation on both sides of the above inequality conditioning on $x^k,y^k$, and conclude that
\begin{eqnarray*}
& & \ex\left[ P(y^{k+1},x^*) - P(y^*,x^{k+1})\right] \\
& \geq & \frac{1}{2\beta^{k+1}}\left( \ex\left\| x^*-x^{k+1}\right\|^2-\left\| x^*-x^k\right\|^2 \right) - \frac{\ex\left\|\delta^{k+1}\right\|^2}{\sqrt{k+1}} \\
& & + \frac{1}{2s}\left( \ex\left\| y^*-y^{k+1}\right\|^2-\left\| y^*-y^k\right\|^2\right) - \frac{\lambda_{\max}(F^\top F)D_y^2}{\sqrt{k+1}}.
\end{eqnarray*}
Finally, Eq.~\eqref{Inequality-Lemma-PDHG} follows from the above inequality and Assumption \ref{assumption_4}.
\end{proof}
\subsubsection{Proof of Theorem \ref{Theorem-PDHG-Convergence}}
{ Because $(y^k,x^k)\in\YCal\times\XCal$ and $\sum_{k=0}^t \alpha^{k+1}=1$ defined in Table~\ref{tab:result}, it holds true that $(\bar{y}^{t},\bar{x}^t)\in\YCal\times\XCal$ for all $t\geq 0$.} By invoking the convexity of function $l(\cdot)$ and summing Eq.~\eqref{Inequality-Lemma-PDHG} over $k=0,1,\ldots,t$, we have
\begin{eqnarray*}
& & \ex\left[ P(\bar{y}^t,x^*) - P(y^*,\bar{x}^t)\right] \nonumber \\
& \geq & \frac{1}{t+1}\sum\limits_{k=0}^t\left[ \frac{1}{2s}\left(\ex\left\| y^*-y^{k+1}\right\|^2-\ex\left\| y^*-y^k\right\|^2\right) \right. \nonumber \\
& & \left. +\frac{\sqrt{k+1}+L}{2}\left( \ex\left\| x^*-x^{k+1}\right\|^2 - \ex\left\| x^*-x^k\right\|^2 \right)  \right. 
 \left. -\frac{\lambda_{\max}(F^\top F)D_y^2}{\sqrt{k+1}} - \frac{\sigma^2}{\sqrt{k+1}}  \right] \nonumber \\
& \geq & -\frac{D_y^2}{2s(t+1)} - \frac{L D_x^2}{2(t+1)} - \frac{D_x^2+2\lambda_{\max}(F^\top F)D_y^2+2\sigma^2}{\sqrt{t+1}}.
\end{eqnarray*}
This together with the fact that $\ex\left[ P(\bar{y}^t,x^*) - P(y^*,\bar{x}^t)\right]\leq 0$ implies the conclusion in Theorem \ref{Theorem-PDHG-Convergence}.
\subsubsection{Proof of Theorem \ref{Theorem-PDHG-High-Probability}}
{ Summing \eqref{Inequality-uniform-convex} over $k=0,1,\ldots,t$ and invoking the convexity of function $l(\cdot)$ and the definition of $D_x$, $D_y$ and $\beta^{k+1}$ yields that} 
\begin{equation*}
P(y^*,\bar{x}^t) - P(\bar{y}^t,x^*) \leq A_t + B_t + C_t,
\end{equation*}
where 
\begin{eqnarray*}
A_t & = & \frac{D_y^2}{2s(t+1)} + \frac{L D_x^2}{2(t+1)} + \frac{D_x^2+2\lambda_{\max}(F^\top F)D_y^2}{\sqrt{t+1}}, \\ 
B_t & = & \frac{1}{t+1}\sum_{k=0}^t \left( x^*-x^k\right)^\top\delta^{k+1}, \\ 
C_t & = & \frac{1}{t+1}\sum_{k=0}^t \frac{\left\|\delta^{k+1}\right\|^2}{\sqrt{k+1}}.
\end{eqnarray*}
Note that random variables $B_t$ and $C_t$ are dependent on $\xi_{[1:k-1]}$. We have the following two claims.
\begin{enumerate}
\item For $\Omega>0$, we have
\begin{equation*}
\Prob\left(B_t>\frac{2\sqrt{\Omega} D_x \sigma}{\sqrt{t+1}}\right) \leq \exp\left(-\Omega\right). 
\end{equation*}
\item For $\Omega>0$, we have
\begin{equation*}
\Prob\left(C_t>\frac{(1+\Omega) \sigma^2}{\sqrt{t+1}}\right) \leq \exp\left(-\Omega\right). 
\end{equation*}
\end{enumerate}
For claim 1, we use Lemma \ref{Lemma-High-Probability} by setting $\zeta_k=\left(x^*-x^k\right)^\top\delta^{k+1}$, and $S_t=\sum_{k=0}^t\zeta_k$, and $\sigma_k=D_x\sigma$,  we can verify that $\ex\left[\zeta_k | \xi_{[1:k-1]}\right]=0$, and 
\begin{eqnarray*}
\ex\left[\exp\left(\frac{\zeta_k^2}{\sigma_k^2}\right) | \xi_{\left[1:k-1\right]}\right] & \leq & \ex\left[\exp\left(\frac{D_x^2\left\|\delta^{k+1}\right\|^2}{\sigma_k^2}\right) | \xi_{\left[1:k-1\right]}\right] \\
& \leq & \exp(1),
\end{eqnarray*}
where the first inequality holds true due to the fact that $\exp(\cdot)$ is a convex function together with $\|x^*-x^k\|\le D_x^2$.
Based on the above results, it follows that
\begin{equation*}
\Prob\left(S_t>\Omega_1 D_x \sigma \sqrt{t+1}\right) \leq \exp\left(-\frac{\Omega_1^2}{4}\right). 
\end{equation*}
Since $S_t = (t+1) B_t$ and $\Omega_1=2\sqrt{\Omega}$, we have
\begin{equation*}
\Prob\left(B_t>\frac{2\sqrt{\Omega} D_x \sigma}{\sqrt{t+1}}\right) \leq \exp\left(-\Omega\right). 
\end{equation*}
For claim 2, we let $\eta_{k+1}=\frac{\frac{1}{\sqrt{k+1}}}{\sum_{i=0}^t \frac{1}{\sqrt{k+1}}}$, and obtain $\eta_k\in\left(0,1\right)$ and $\sum_{k=0}^t\eta_k=1$. Since $\left\{\delta^{k+1}\right\}_{k=0}^t$ are independent and
applying Assumption \ref{assumption_4}, we have
\begin{eqnarray*}
& & \ex\left[\exp\left(\frac{\sum_{k=0}^t \eta_{k+1}\left\|\delta^{k+1}\right\|^2}{\sigma^2}\right) \right] \\
& = & \prod_{k=0}^t \ex\left[\exp\left(\frac{\eta_{k+1}\left\|\delta^{k+1}\right\|^2}{\sigma^2}\right) \right] \\
& \leq & \prod_{k=0}^t \left(\ex\left[\exp\left(\frac{\left\|\delta^{k+1}\right\|^2}{\sigma^2}\right) \right]\right)^{\eta_{k+1}} \\
& \leq & \prod_{k=0}^t\left(\exp(1)\right)^{\eta_{k+1}} = \exp\left(\sum_{k=0}^t \eta_{k+1}\right) = \exp(1). 
\end{eqnarray*}
By Markov's Inequality, we have 
\begin{eqnarray*}
& & \Prob\left(C_t>\frac{(1+\Omega) \sigma^2}{\sqrt{t+1}}\right) \\
& \leq & \Prob\left(C_t>\frac{(1+\Omega) \sigma^2}{2(t+1)}\sum_{k=0}^t\frac{1}{\sqrt{k+1}}\right) \\
& \leq & \exp\left(-(1+\Omega)\right) \ex\left[\exp\left(\frac{\sum_{k=0}^t \eta_{k+1}\left\|\delta^{k+1}\right\|^2}{\sigma^2}\right) \right] \\
& \leq & \exp\left(-\Omega\right). 
\end{eqnarray*}
In conclusion, we have the desired inequality.

\subsection{Convergence of uniformly averaging under strongly convex objectives}
In this subsection, we analyze the convergence property of the SPDHG algorithm with uniformly averaged iterates for strongly convex objectives.
\begin{lemma}\label{Lemma-Strong-PDHG}
Let $(y^{k+1},x^{k+1})$ be generated by Algorithm \ref{ALG:SPDHG}, and $\beta^{k+1}$ and $\alpha^{k+1}$ be shown in Table \ref{tab:result}. For any any stationary point $\left(y^*,x^*\right)$ of problem~\eqref{prob:N}, it holds that
\begin{eqnarray}\label{Inequality-Lemma-Strong-PDHG}
0 & \geq & \ex\left[ P(y^{k+1},x^*) - P(y^*,x^{k+1}) \right] \\
& \geq & \frac{\mu(k+1)+L}{2}\ex\left\| x^*-x^{k+1}\right\|^2 + \frac{1}{2s}\ex\left\| y^*-y^{k+1}\right\|^2 \nonumber \\
& & -\frac{\mu k+L}{2}\ex\left\| x^*-x^k\right\|^2 - \frac{1}{2s}\ex\left\| y^*-y^k\right\|^2 - \frac{\lambda_{\max}(F^\top F)D_y^2+\sigma^2}{\mu(k+1)}. \nonumber
\end{eqnarray}
\end{lemma}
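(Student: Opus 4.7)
The plan is to adapt the proof of Lemma \ref{Lemma-PDHG} by exploiting the strong convexity of $l$ together with the modified step-size $\beta^{k+1}=\frac{1}{\mu(k+1)+L}$. The first inequality $0\geq \ex[P(y^{k+1},x^*)-P(y^*,x^{k+1})]$ is immediate from the saddle-point defining property of $(y^*,x^*)$, so the work is to prove the quantitative lower bound.

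First I would write down the first-order optimality conditions for the $y^{k+1}$- and $x^{k+1}$-updates exactly as in Lemma \ref{Lemma-PDHG}, obtaining the identity \eqref{inequality-key-PDHG} verbatim (its derivation does not use convexity of $l$). The genuinely new step is to refine the upper bound on $(x^*-x^{k+1})^\top \nabla l(x^k)$: combining the smoothness inequality
\[
l(x^{k+1})\leq l(x^k)+\nabla l(x^k)^\top(x^{k+1}-x^k)+\tfrac{L}{2}\|x^{k+1}-x^k\|^2
\]
with the strong convexity inequality (Assumption \ref{assumption_5}) $l(x^*)\geq l(x^k)+\nabla l(x^k)^\top(x^*-x^k)+\tfrac{\mu}{2}\|x^*-x^k\|^2$ and subtracting yields
\[
(x^*-x^{k+1})^\top\nabla l(x^k)\leq l(x^*)-l(x^{k+1})+\tfrac{L}{2}\|x^{k+1}-x^k\|^2-\tfrac{\mu}{2}\|x^*-x^k\|^2.
\]
This is the strongly convex analogue of the bound used in Lemma \ref{Lemma-PDHG}, and the new $-\tfrac{\mu}{2}\|x^*-x^k\|^2$ term is exactly what allows the coefficient in front of $\|x^*-x^k\|^2$ in the final recursion to shrink from $\tfrac{1}{2\beta^{k+1}}$ down to $\tfrac{\mu k+L}{2}$.

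Next I would apply Young's inequality in two places: once to split off the noise contribution $(x^k-x^{k+1})^\top\delta^{k+1}$ (after writing $(x^*-x^{k+1})^\top\delta^{k+1}=(x^*-x^k)^\top\delta^{k+1}+(x^k-x^{k+1})^\top\delta^{k+1}$ to isolate the zero-mean linear part) and once on the cross term $(y^*-y^{k+1})^\top(Fx^k-Fx^{k+1})$ as in \eqref{inequality-repeatable1}, but in both cases with Young parameter $\mu(k+1)$ in place of the $\sqrt{k+1}$ used in the convex case. This produces two absorption terms of the form $\tfrac{\mu(k+1)}{4}\|x^k-x^{k+1}\|^2$ together with residuals $\tfrac{\|\delta^{k+1}\|^2}{\mu(k+1)}$ and $\tfrac{\lambda_{\max}(F^\top F)D_y^2}{\mu(k+1)}$. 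Plugging these bounds together with \eqref{inequality-key-PDHG} and the smoothness/strong-convexity estimate above and rearranging, the total coefficient of $\|x^k-x^{k+1}\|^2$ collapses to
\[
\tfrac{1}{2\beta^{k+1}}-\tfrac{L}{2}-\tfrac{\mu(k+1)}{4}-\tfrac{\mu(k+1)}{4}=0
\]
precisely because $\beta^{k+1}=\tfrac{1}{\mu(k+1)+L}$, so this term is discarded for free and no constant like $\tfrac{1}{2}$ is lost.

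Finally I would take conditional expectation given $(x^k,y^k)$: the linear noise term $(x^*-x^k)^\top\delta^{k+1}$ vanishes by unbiasedness and $\ex[\|\delta^{k+1}\|^2\mid x^k,y^k]\leq \sigma^2$ by Assumption \ref{assumption_4}, combining the two residuals into $\tfrac{\lambda_{\max}(F^\top F)D_y^2+\sigma^2}{\mu(k+1)}$. Using the algebraic identity
\[
l(x^*)-l(x^{k+1})-(y^*-y^{k+1})^\top Fx^{k+1}+(x^*-x^{k+1})^\top F^\top y^{k+1}=P(y^{k+1},x^*)-P(y^*,x^{k+1}),
\]
together with $\tfrac{1}{2\beta^{k+1}}-\tfrac{\mu}{2}=\tfrac{\mu k+L}{2}$, yields exactly \eqref{Inequality-Lemma-Strong-PDHG}. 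The only real obstacle is bookkeeping: verifying that the single choice of Young parameter $\mu(k+1)$ simultaneously cancels the $\|x^k-x^{k+1}\|^2$ coefficient and produces the advertised $\mu(k+1)$ denominator in the error term; no new analytic difficulty arises beyond this balancing act.
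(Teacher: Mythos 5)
Your proposal is correct and follows essentially the same route as the paper: the same optimality-condition inequality \eqref{inequality-key-PDHG}, the same combined smoothness/strong-convexity bound on $\left(x^*-x^{k+1}\right)^\top\nabla l(x^k)$, and the same two applications of Young's inequality with parameter $\mu(k+1)$ (the paper's $\gamma=\kappa=\mu(k+1)$ in \eqref{inequality-repeatable1}--\eqref{inequality-repeatable2}), yielding the exact cancellation of the $\left\|x^k-x^{k+1}\right\|^2$ coefficient before taking conditional expectation. No gaps.
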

\begin{proof}
By using the same argument as Lemma \ref{Lemma-PDHG} and the strongly convexity of $l$, we have
\begin{eqnarray}\label{inequality-repeatable2}
& & \left( x^*-x^{k+1}\right)^\top\nabla l(x^k,\xi^{k+1}) \\
& \leq & l(x^*)- l (x^k) - \frac{\mu}{2}\left\| x^* - x^k\right\|^2 + l(x^k) - l(x^{k+1}) + \frac{L}{2}\left\| x^k - x^{k+1}\right\|^2  + \left( x^*-x^{k+1}\right)^\top\delta^{k+1} \nonumber \\
& \leq & l(x^*)-l(x^{k+1}) + \left( x^*-x^k\right)^\top\delta^{k+1} - \frac{\mu}{2}\left\| x^* - x^k\right\|^2 + \frac{L}{2}\left\| x^k - x^{k+1}\right\|^2 \nonumber \\
& & + \frac{\kappa}{4}\left\| x^k - x^{k+1}\right\|^2 + \frac{1}{\kappa}\left\|\delta^{k+1}\right\|^2. \nonumber 
\end{eqnarray}
Substituting Eq.~\eqref{inequality-repeatable1} with $\gamma = \mu(k+1)$ and Eq.~\eqref{inequality-repeatable2} with $\kappa = \mu(k+1)$ into Eq.~\eqref{inequality-key-PDHG} yields that
\begin{eqnarray}\label{Inequality-uniform-strongly-convex}
& & l(x^*)-l(x^{k+1}) + \left(\begin{array}{c} y^* - y^{k+1} \\ x^* - x^{k+1}\end{array} \right)^{\top}\left(\begin{array}{c} - Fx^{k+1} \\ F^\top y^{k+1} \end{array} \right) \\
& \geq & \frac{1}{2s}\left(\left\| y^*-y^{k+1}\right\|^2 - \left\| y^*-y^k\right\|^2\right) - \frac{\left\|\delta^{k+1}\right\|^2}{\mu(k+1)} + \frac{\mu(k+1)+L}{2}\left\| x^*-x^{k+1}\right\|^2 - \frac{\mu k + L}{2}\left\| x^*-x^k\right\|^2 \nonumber \\
& &  + \left(\frac{1}{2\beta^{k+1}} - \frac{L+\mu(k+1)}{2}\right)\left\| x^k - x^{k+1}\right\|^2 - \frac{\lambda_{\max}(F^\top F)D_y^2}{\mu(k+1)} - \left( x^*-x^k\right)^\top\delta^{k+1}. \nonumber 
\end{eqnarray}
Then we obtain Eq.~\eqref{Inequality-Lemma-Strong-PDHG} as the same as that in Lemma \ref{Lemma-PDHG}.
\end{proof}
\subsubsection{Proof of Theorem \ref{Theorem-Strong-PDHG-Convergence}}
{ Because $(y^k,x^k)\in\YCal\times\XCal$, it holds true that $(\bar{y}^{t},\bar{x}^t)\in\YCal\times\XCal$ for all $t\geq 0$ since $\sum_{k=0}^t \alpha^{k+1}=1$ where $\alpha^{k+1}$ is defined in Table~\ref{tab:result}.} By invoking the convexity of function $l(\cdot)$ and summing Eq.~\eqref{Inequality-Lemma-Strong-PDHG} over $k=0,1,\ldots,t$, we have
\begin{eqnarray*}
& & \ex\left[ P(\bar{y}^t,x^*) - P(y^*,\bar{x}^t)\right] \nonumber \\
& \geq & \frac{1}{t+1}\sum\limits_{k=0}^t\left[ \frac{1}{2s}\left( \ex\left\| y^*-y^{k+1}\right\|^2 - \ex\left\| y^*-y^k\right\|^2\right) \right. \nonumber \\
& & \left. + \frac{\mu(k+1)+L}{2}\left\| x^*-x^{k+1}\right\|^2 -\frac{\mu k+L}{2}\left\| x^*-x^k\right\|^2 
 - \frac{\lambda_{\max}(F^\top F)D_y^2+\sigma^2}{\mu(k+1)} \right] \nonumber \\
& \geq & -\frac{D_y^2}{2s(t+1)} - \frac{LD_x^2}{2(t+1)} - \frac{\left(\lambda_{\max}(F^\top F)D_y^2+\sigma^2\right)\log(t+1)}{\mu(t+1)}.
\end{eqnarray*}
This together with the fact that $\ex\left[ P(\bar{y}^t,x^*) - P(y^*,\bar{x}^t)\right]\leq 0$ implies the conclusion in Theorem  \ref{Theorem-Strong-PDHG-Convergence}.
\subsubsection{Proof of Theorem \ref{Theorem-Strong-PDHG-High-Probability}}
{ Summing \eqref{Inequality-uniform-strongly-convex} over $k=0,1,\ldots,t$ and invoking the convexity of function $l(\cdot)$ and the definition of $D_x$, $D_y$ and $\beta^{k+1}$ yields that} 
\begin{equation*}
P(y^*,\bar{x}^t) - P(\bar{y}^t,x^*) \leq A_t + B_t + C_t,
\end{equation*}
where 
\begin{eqnarray*}
A_t & = & \frac{D_y^2}{2s(t+1)} + \frac{L D_x^2}{2(t+1)} + \frac{\lambda_{\max}(F^\top F)D_y^2\log(t+1)}{\mu(t+1)}, \\ 
B_t & = & \frac{1}{t+1}\sum_{k=0}^t \left( x^*-x^k\right)^\top\delta^{k+1}, \\ 
C_t & = & \frac{1}{t+1}\sum_{k=0}^t \frac{\left\|\delta^{k+1}\right\|^2}{\mu(k+1)}.
\end{eqnarray*}
Note that random variables $B_t$ and $C_t$ are dependent on $\xi_{[1:k-1]}$. We have the following two claims.
\begin{enumerate}
\item For $\Omega>0$, we have
\begin{equation*}
\Prob\left(B_t>\frac{2\sqrt{\Omega} D_x \sigma}{\sqrt{t+1}}\right) \leq \exp\left(-\Omega\right). 
\end{equation*}
\item For $\Omega>0$, we have
\begin{equation*}
\Prob\left(C_t>\frac{(1+\Omega) \sigma^2 \log(t+1)}{2\mu(t+1)}\right) \leq \exp\left(-\Omega\right). 
\end{equation*}
\end{enumerate}
For claim 1, we apply the similar argument used in Theorem \ref{Theorem-PDHG-High-Probability}, and obtain the desired inequality. For claim 2, we let $\eta_{k+1}=\frac{\frac{1}{k+1}}{\sum_{i=0}^t \frac{1}{k+1}}$, and apply the similar argument used in Theorem \ref{Theorem-PDHG-High-Probability}, we have
\begin{equation*}
\ex\left[\exp\left(\frac{\sum_{k=0}^t \eta_{k+1}\left\|\delta^{k+1}\right\|^2}{\sigma^2}\right) \right] \leq \exp(1). 
\end{equation*}
By Markov's Inequality, we have 
\begin{eqnarray*}
& & \Prob\left(C_t>\frac{(1+\Omega) \sigma^2\log(t+1)}{2\mu(t+1)}\right) \\
& \leq & \Prob\left(C_t>\frac{(1+\Omega) \sigma^2}{2(t+1)}\sum_{k=0}^t\frac{1}{\mu(k+1)}\right) \\
& \leq & \exp\left(-(1+\Omega)\right) \ex\left[\exp\left(\frac{\sum_{k=0}^t \eta_{k+1}\left\|\delta^{k+1}\right\|^2}{\sigma^2}\right) \right] \\
& \leq & \exp\left(-\Omega\right). 
\end{eqnarray*}
In conclusion, we have the desired inequality.

\subsection{Convergence of non-uniformly averaging under strongly convex objectives}
In this subsection, we analyze the convergence property of the SPDHG algorithm with non-uniformly averaged iterates for strongly convex objectives.
\begin{lemma}\label{Lemma-Strong-Nonuniform-PDHG}
Let $(y^{k+1},x^{k+1})$ be generated by Algorithm \ref{ALG:SPDHG}, and $\beta^{k+1}$ and $\alpha^{k+1}$ be shown in Table \ref{tab:result}. For any stationary point $\left(y^*,x^*\right)$ of problem~\eqref{prob:N}, it holds that
\begin{eqnarray}\label{Inequality-Lemma-Strong-Nonuniform-PDHG}
0 & \geq & \ex\left[ P(y^{k+1},x^*) - P(y^*,x^{k+1}) \right] \\
& \geq & \frac{\mu(k+2)+2L}{4}\ex\left\| x^*-x^{k+1}\right\|^2 + \frac{1}{2s}\ex\left\| y^*-y^{k+1}\right\|^2 \nonumber \\
& & -\frac{\mu k+2L}{4}\ex\left\| x^*-x^k\right\|^2 - \frac{1}{2s}\ex\left\| y^*-y^k\right\|^2 - \frac{2\lambda_{\max}(F^\top F)D_y^2+2\sigma^2}{\mu(k+1)}. \nonumber
\end{eqnarray}
\end{lemma}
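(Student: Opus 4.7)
The plan is to mirror the proof of Lemma \ref{Lemma-Strong-PDHG} essentially verbatim, but retune the two free constants appearing in Young's inequality so that they are compatible with the new step-size $\beta^{k+1} = \frac{2}{\mu(k+2)+2L}$ and produce the constants demanded on the right-hand side of \eqref{Inequality-Lemma-Strong-Nonuniform-PDHG}. Since the target residual constant is $\frac{2\lambda_{\max}(F^\T F)D_y^2 + 2\sigma^2}{\mu(k+1)}$, i.e.\ twice what appeared in Lemma \ref{Lemma-Strong-PDHG}, the natural choice will be $\gamma = \kappa = \mu(k+1)/2$ rather than $\mu(k+1)$.

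Concretely, first I would write down the optimality conditions for the $y$- and $x$-updates to recover the base inequality \eqref{inequality-key-PDHG}, which does not depend on the choice of $\beta^{k+1}$. Next I apply \eqref{inequality-repeatable1} with $\gamma = \mu(k+1)/2$, producing the residual $\frac{2\lambda_{\max}(F^\T F)D_y^2}{\mu(k+1)}$ and an extra $\frac{\mu(k+1)}{8}\|x^k-x^{k+1}\|^2$. Then I invoke the strongly convex bound \eqref{inequality-repeatable2} with $\kappa = \mu(k+1)/2$, which introduces the noise residual $\frac{2\|\delta^{k+1}\|^2}{\mu(k+1)}$, another $\frac{\mu(k+1)}{8}\|x^k-x^{k+1}\|^2$, the strong-convexity term $-\frac{\mu}{2}\|x^*-x^k\|^2$ on the bounding side, and the linear-in-noise term $(x^*-x^k)^\T\delta^{k+1}$.

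The key algebraic check is that $\|x^k-x^{k+1}\|^2$ can still be dropped with a non-negative coefficient. Substituting $\frac{1}{2\beta^{k+1}} = \frac{\mu(k+2)+2L}{4}$, its net coefficient becomes
\[
\frac{\mu(k+2)+2L}{4} - \frac{L}{2} - \frac{\mu(k+1)/2 + \mu(k+1)/2}{4} = \frac{\mu}{4} \geq 0,
\]
so this term may be discarded. Simultaneously, $\frac{1}{2\beta^{k+1}} = \frac{\mu(k+2)+2L}{4}$ gives exactly the leading coefficient on $\|x^*-x^{k+1}\|^2$, and combining $-\frac{1}{2\beta^{k+1}}$ with the $+\frac{\mu}{2}\|x^*-x^k\|^2$ contribution from strong convexity produces $-\frac{\mu k + 2L}{4}$ on $\|x^*-x^k\|^2$, matching \eqref{Inequality-Lemma-Strong-Nonuniform-PDHG}; the $y$-block coefficients are unchanged from Lemma \ref{Lemma-Strong-PDHG}.

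Finally, taking conditional expectation with respect to $\xi^{k+1}$ given $(x^k,y^k)$ kills $\ex[(x^*-x^k)^\T\delta^{k+1}] = 0$ by the unbiasedness in Assumption \ref{assumption_4}, and bounds $\ex\|\delta^{k+1}\|^2 \leq \sigma^2$ gives the $\frac{2\sigma^2}{\mu(k+1)}$ piece. Combined with the $\frac{2\lambda_{\max}(F^\T F)D_y^2}{\mu(k+1)}$ from the Young step, this delivers the target residual. The only real obstacle is the bookkeeping needed to verify that this particular re-parameterization of $(\gamma,\kappa)$ is simultaneously compatible with (i) the step-size $\beta^{k+1}$ (so the cross term vanishes with non-negative coefficient), (ii) the shifted $\|x^*-x^k\|^2$ coefficient on the RHS, and (iii) the prescribed constant $\frac{2(\cdots)}{\mu(k+1)}$; no new analytical idea beyond Lemmas \ref{Lemma-PDHG} and \ref{Lemma-Strong-PDHG} is required.
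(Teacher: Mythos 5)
Your proposal is correct and coincides with the paper's own proof: the authors likewise take $\gamma=\kappa=\frac{\mu(k+1)}{2}$ in Eq.~\eqref{inequality-repeatable1} and Eq.~\eqref{inequality-repeatable2}, substitute into Eq.~\eqref{inequality-key-PDHG}, and then argue exactly as in Lemma~\ref{Lemma-Strong-PDHG}. Your coefficient bookkeeping (the $\frac{\mu}{4}\geq 0$ net coefficient on $\|x^k-x^{k+1}\|^2$ and the $-\frac{\mu k+2L}{4}$ coefficient on $\|x^*-x^k\|^2$) is the same verification the paper leaves implicit.
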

\begin{proof}
By substituting Eq.~\eqref{inequality-repeatable1} with $\gamma = \frac{\mu(k+1)}{2}$ and Eq.~\eqref{inequality-repeatable2} with $\kappa = \frac{\mu(k+1)}{2}$ into Eq.~\eqref{inequality-key-PDHG}, we have
\begin{eqnarray*}
 \left(y^*-y^{k+1}\right)^\top \left( Fx^k - Fx^{k+1}\right)
  \geq  - \frac{2\lambda_{\max}(F^\top F)D_y^2}{\mu(k+1)} - \frac{\mu(k+1)}{8}\left\| x^k - x^{k+1} \right\|^2,
\end{eqnarray*}
and
\begin{eqnarray*}
& & \left( x^*-x^{k+1}\right)^\top\nabla l(x^k,\xi^{k+1}) \\
& \leq & l(x^*)-l(x^{k+1}) + \left( x^*-x^k\right)^\top\delta^{k+1} - \frac{\mu}{2}\left\| x^* - x^k\right\|^2  \nonumber \\
& & + \frac{L}{2}\left\| x^k - x^{k+1}\right\|^2 + \frac{\mu(k+1)}{8}\left\| x^k - x^{k+1}\right\|^2 + \frac{2}{\mu(k+1)}\left\|\delta^{k+1}\right\|^2.
\end{eqnarray*}
Then we plug the above two inequalities into Eq.~\eqref{inequality-key-PDHG}, and then follow the same argument as Lemma \ref{Lemma-Strong-PDHG} to obtain the desired inequality in Eq.~\eqref{Inequality-Lemma-Strong-Nonuniform-PDHG}.
\end{proof}
\subsubsection{Proof of Theorem \ref{Theorem-Strong-Nonuniform-PDHG}}
{ Because $(y^k,x^k)\in\YCal\times\XCal$, it holds true that $(\bar{y}^{t},\bar{x}^t)\in\YCal\times\XCal$ for all $t\geq 0$ since $\sum_{k=0}^t \alpha^{k+1}=1$ where $\alpha^{k+1}$ is defined in Table~\ref{tab:result}.} By invoking the convexity of function $l(\cdot)$ and summing Eq.~\eqref{Inequality-Lemma-Strong-Nonuniform-PDHG} over $k=0,1,\ldots,t$, we have
\begin{eqnarray*}
& & \ex\left[ P(\bar{y}^t,x^*) - P(y^*,\bar{x}^t)\right] \nonumber \\
& \geq & \frac{2}{(t+1)(t+2)}\sum\limits_{k=0}^t(k+1)\left[  - \frac{2\lambda_{\max}(F^\top F)D_y^2+2\sigma^2}{\mu(k+1)}  + \frac{\mu(k+2)+2L}{4}\left\| x^*-x^{k+1}\right\|^2 \right. \nonumber \\
& & \left. -\frac{\mu k+2L}{4}\left\| x^*-x^k\right\|^2 + \frac{1}{2s}\left( \ex\left\| y^*-y^{k+1}\right\|^2 - \ex\left\| y^*-y^k\right\|^2\right)  \right] \nonumber \\
& \geq & -\frac{D_y^2}{s(t+2)} - \frac{LD_x^2}{t+2} - \frac{4\lambda_{\max}(F^\top F)D_y^2+4\sigma^2}{\mu(t+2)} \nonumber \\
& & + \frac{\mu}{2(t+1)(t+2)}\sum\limits_{k=0}^t \left[(k+2)(k+1)\left\| x^*-x^{k+1}\right\|^2  - (k+1)k\left\| x^*-x^k\right\|^2 \right].
\end{eqnarray*}
Therefore, we conclude that
\begin{eqnarray*}
0 & \geq & \ex\left[ P(\bar{y}^t,x^*) - P(y^*,\bar{x}^t)\right] \\
& \geq & -\frac{D_y^2}{s(t+2)} - \frac{LD_x^2}{t+2} - \frac{4\lambda_{\max}(F^\top F)D_y^2+4\sigma^2}{\mu(t+2)},
\end{eqnarray*}
which implies the conclusion in Theorem \ref{Theorem-Strong-Nonuniform-PDHG}.
\subsubsection{Proof of Theorem \ref{Theorem-Strong-Nonuniform-PDHG-High-Probability}}
{ By using the same argument as Theorem~\ref{Theorem-Strong-PDHG-High-Probability}, we have}
\begin{equation*}
P(y^*,\bar{x}^t) - P(\bar{y}^t,x^*) \leq A_t + B_t + C_t,
\end{equation*}
where 
\begin{eqnarray*}
A_t & = & \frac{D_y^2}{s(t+2)} + \frac{L D_x^2}{t+2} + \frac{4\lambda_{\max}(F^\top F)D_y^2}{\mu(t+2)}, \\ 
B_t & = & \frac{2}{(t+1)(t+2)}\sum_{k=0}^t (k+1)\left( x^*-x^k\right)^\top\delta^{k+1}, \\ 
C_t & = & \frac{4}{\mu(t+1)(t+2)}\sum_{k=0}^t \left\|\delta^{k+1}\right\|^2.
\end{eqnarray*}
Note that random variables $B_t$ and $C_t$ are dependent on $\xi_{[1:k-1]}$. We have the following two claims.
\begin{enumerate}
\item For $\Omega>0$, we have
\begin{equation*}
\Prob\left(B_t>\frac{2\sqrt{2\Omega} D_x \sigma}{\sqrt{t+2}}\right) \leq \exp\left(-\Omega\right). 
\end{equation*}
\item For $\Omega>0$, we have
\begin{equation*}
\Prob\left(C_t>\frac{(1+\Omega) \sigma^2}{\sqrt{t+1}}\right) \leq \exp\left(-\Omega\right). 
\end{equation*}
\end{enumerate}
For claim 1, we use Lemma \ref{Lemma-High-Probability} by setting $\zeta_k=(k+1)\left(x^*-x^k\right)^\top\delta^{k+1}$, and $S_t=\sum_{k=0}^t\zeta_k$, and $\sigma_k=(k+1)D_x\sigma$,  we can verify that $\ex\left[\zeta_k | \xi_{[1:k-1]}\right]=0$, and apply the similar argument used in Theorem \ref{Theorem-PDHG-High-Probability}, we have 
\begin{eqnarray*}
\ex\left[\exp\left(\frac{\zeta_k^2}{\sigma_k^2}\right) | \xi_{\left[1:k-1\right]}\right] & \leq & \ex\left[\exp\left(\frac{D_x^2\left\|\delta^{k+1}\right\|^2}{\sigma_k^2}\right) | \xi_{\left[1:k-1\right]}\right] \\
& \leq & \exp(1). 
\end{eqnarray*}
Based on the above results, it follows that
\begin{equation*}
\Prob\left(S_t>\Omega_1 D_x \sigma \sqrt{\frac{(t+1)(t+2)(2t+3)}{6}}\right) \leq \exp\left(-\frac{\Omega_1^2}{4}\right). 
\end{equation*}
Since $S_t = \frac{(t+1)(t+2)}{2} B_t$ and $\Omega_1=2\sqrt{\Omega}$, we have
\begin{equation*}
\Prob\left(B_t>\frac{2\sqrt{2\Omega} D_x \sigma}{\sqrt{t+2}}\right) \leq \exp\left(-\Omega\right). 
\end{equation*}
For claim 2, we let $\eta_{k+1}=\frac{1}{t+1}$, and obtain that 
\begin{eqnarray*}
& & \ex\left[\exp\left(\frac{\sum_{k=0}^t \eta_{k+1}\left\|\delta^{k+1}\right\|^2}{\sigma^2}\right) \right] \\
& = & \prod_{k=0}^t \ex\left[\exp\left(\frac{\eta_{k+1}\left\|\delta^{k+1}\right\|^2}{\sigma^2}\right) \right] \\
& \leq & \prod_{k=0}^t \left(\ex\left[\exp\left(\frac{\left\|\delta^{k+1}\right\|^2}{\sigma^2}\right) \right]\right)^{\eta_{k+1}} \\
& \leq & \prod_{k=0}^t\left(\exp(1)\right)^{\eta_{k+1}} = \exp\left(\sum_{k=0}^t \eta_{k+1}\right) = \exp(1). 
\end{eqnarray*}
By Markov's Inequality, we have 
\begin{eqnarray*}
& & \Prob\left(C_t>\frac{4(1+\Omega) \sigma^2}{\mu(t+2)}\right) \\
& \leq & \Prob\left(C_t>\frac{4(1+\Omega) \sigma^2}{\mu(t+1)(t+2)}\sum_{k=0}^t 1\right) \\
& \leq & \exp\left(-(1+\Omega)\right) \ex\left[\exp\left(\frac{\sum_{k=0}^t \eta_{k+1}\left\|\delta^{k+1}\right\|^2}{\sigma^2}\right) \right] \\
& \leq & \exp\left(-\Omega\right). 
\end{eqnarray*}
In conclusion, we have the desired inequality.

%---------------------dataset stat----------------------------------------
\begin{table}[h]
\caption{Statistics of datasets.}\label{tab:data}
\begin{center}
\begin{tabular}{c|c|c} \hline
Dataset & Number of Samples & Dimension \\ \hline  
\textit{splice} & 1,000 & 60 \\ 
\textit{svmguide3} & 1,243 & 21\\
\textit{hitech} & 2,301 & 10,080 \\
\textit{la12} & 2,301 & 31,472 \\
\textit{k1b} & 2,340 & 21,839 \\ 
\textit{ng3sim} & 2,998 & 15,810 \\ 
\textit{la2} & 3,075 & 31,472 \\
\textit{la1} & 3,204 & 31,472 \\
\textit{reviews} & 4,069 & 18,482 \\
%\textit{gisette\_scale} & 6,000 & 5,001\\
\textit{classic} & 7,094 & 41,681 \\
\textit{sports} & 8,580 & 14,866 \\
%\textit{mushrooms} & 8,124 & 112 \\
\textit{ohscal} & 11,162  & 11,465 \\      
\textit{20news} & 16,242 & 100 \\
%\textit{rcv1} & 20,242  & 47,237\\
\textit{a9a} & 32,561 & 123 \\
\textit{w8a} & 64,700 & 300 \\
\textit{covtype} & 581,012 & 55 \\
\textit{SUSY} & 5,000,000 & 19 \\
\hline
\end{tabular}
\end{center}
%\vspace{-10pt}
\end{table}

\section{Experiments}\label{sec:experiment}
We conduct experiments by evaluating two models: graph-guided logistic regression (GGLR)~\eqref{Prob:GGLR} and graph-guided regularized logistic regression (GGRLR)~\eqref{Prob:GGRLR} in \cite{Zhong-2014-Fast},
\begin{equation}\label{Prob:GGLR}
\min_{x\in \XCal} \ l(x) + \lambda\|Fx\|_1
\end{equation}
and
\begin{equation}\label{Prob:GGRLR}
\min_{x\in \XCal} \ l(x) + \frac{\gamma}{2}\left\|x\right\|_2^2 + \lambda\|Fx\|_1.
\end{equation}
Here $l(x) = \frac{1}{N}\left[\sum\limits_{i=1}^N l(x,\xi_i)\right]$ is empirical average of $l(x,\xi_i)$ on a set of samples, and $l(x,\xi_i)$ is logistic function $\log\left( 1+\exp\left(-b_i \cdot a_i^\top x\right) \right)$, where $\xi_i = (a_i,b_i)$. $\lambda$ is the regularization parameter. 
$F$ is a penalty matrix promoting the desired sparse structure of $x$, which is generated by sparse inverse covariance selection \cite{Scheinberg-2010-Sparse}. 
To proceed, we reformulate problems~\eqref{Prob:GGLR} and~\eqref{Prob:GGRLR} into the convex-concave saddle point problem~\eqref{prob:N} and apply our proposed SPDHG algorithm. 
On the other hand, we can reformulate problems~\eqref{Prob:GGLR} and~\eqref{Prob:GGRLR} into problem~\eqref{prob-ADMM} by introducing an additional variable $z=Fx$ and then apply stochastic ADMM algorithms.

In the experiments, we compare our SPDHG algorithm with the LPDHG algorithm, and six existing stochastic ADMM algorithms\footnote{We use the code of SADMM, OPG-ADMM, RDA-ADMM and FSADMM provided by the authors while implementing two variants of adaptive SADMM according to \cite{Zhao-2015-Adaptive}.}: SADMM \cite{Ouyang-2013-Stochastic}, OPG-ADMM \cite{Suzuki-2013-Dual}, RDA-ADMM \cite{Suzuki-2013-Dual}, FSADMM\cite{Zhong-2014-Fast}, and two variants of adaptive SADMM (\textit{i.e.}, SADMMdiag and SADMMfull) \cite{Zhao-2015-Adaptive}. 
We do not include online ADMM \cite{Wang-2012-Online} and SDCA-ADMM \cite{Suzuki-2014-Stochastic} since \cite{Suzuki-2013-Dual} has shown that RDA-ADMM performs better than online ADMM while \cite{Zhang-2015-Stochastic} has shown that the performance of FSADMM is comparable to that of SDCA-ADMM. 
Finally, SPDC and Adaptive SPDC are excluded from the experiments since they cannot solve problem~\eqref{Prob:GGLR} and problem~\eqref{Prob:GGRLR}, as clarified in Section \ref{sec:introduction}.

%-----------------------------------------------------------------------
\begin{figure}
\center
{\includegraphics[width=0.8\textwidth]{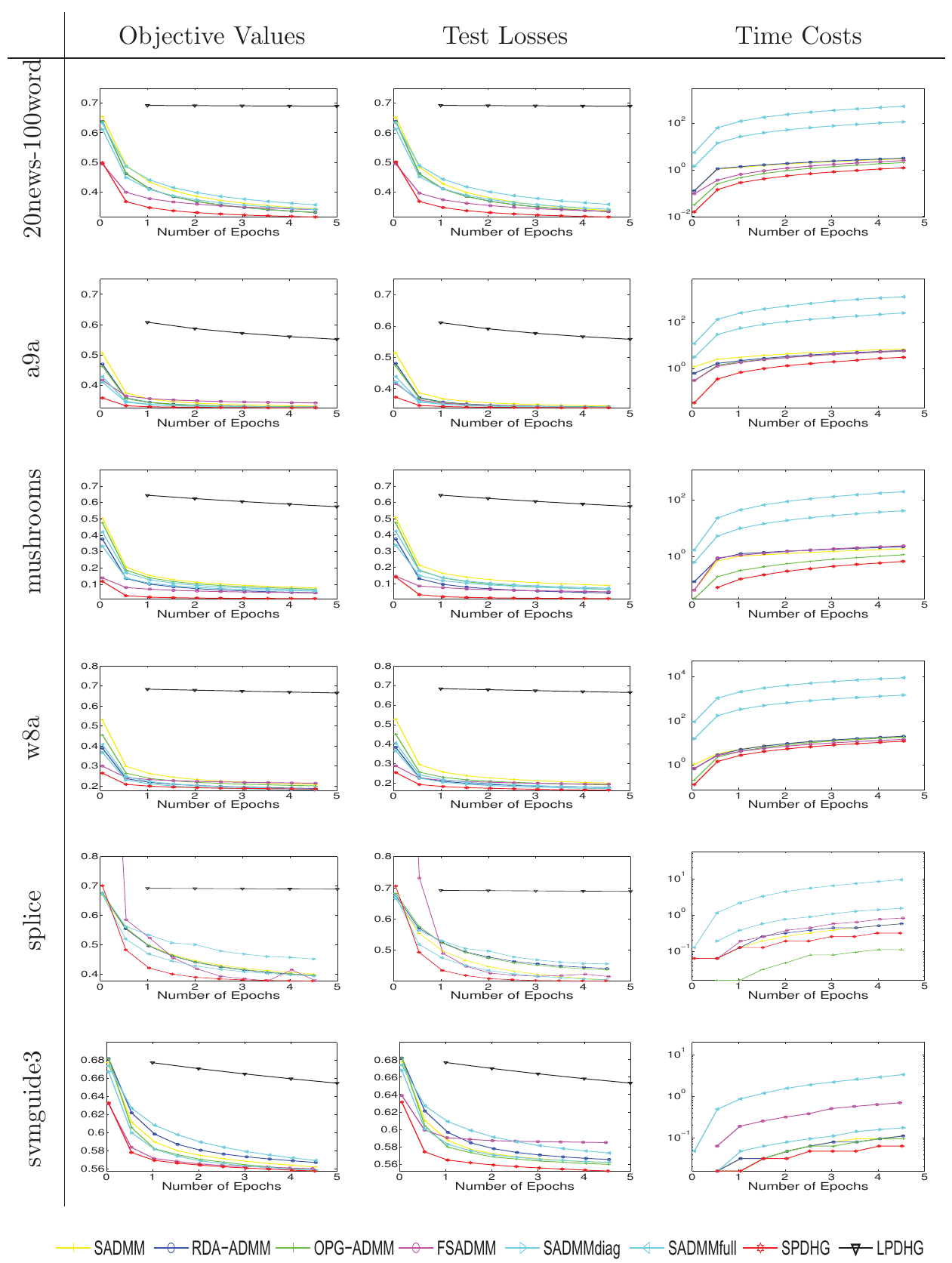}}
\caption{Comparison of SPDHG with STOC-ADMM (SADMM), RDA-ADMM, OPG-ADMM, Fast-SADMM (FSADMM), Ada-SADMMdiag, Ada-SADMMfull and LPDHG on \textbf{Graph-Guided Logistic Regression} Task. \textbf{Left Panels}: Averaged objective values. \textbf{Middle Panels}: Averaged test losses.
\textbf{Right Panels}: Averaged time costs (in seconds).}
\label{fig-GGLR-complexity}
%\vspace{-6pt}
\end{figure}
%---------------------------------------------------------------------------

%-----------------------------------------------------------------------
\begin{figure}
\center
{\includegraphics[width=0.8\textwidth]{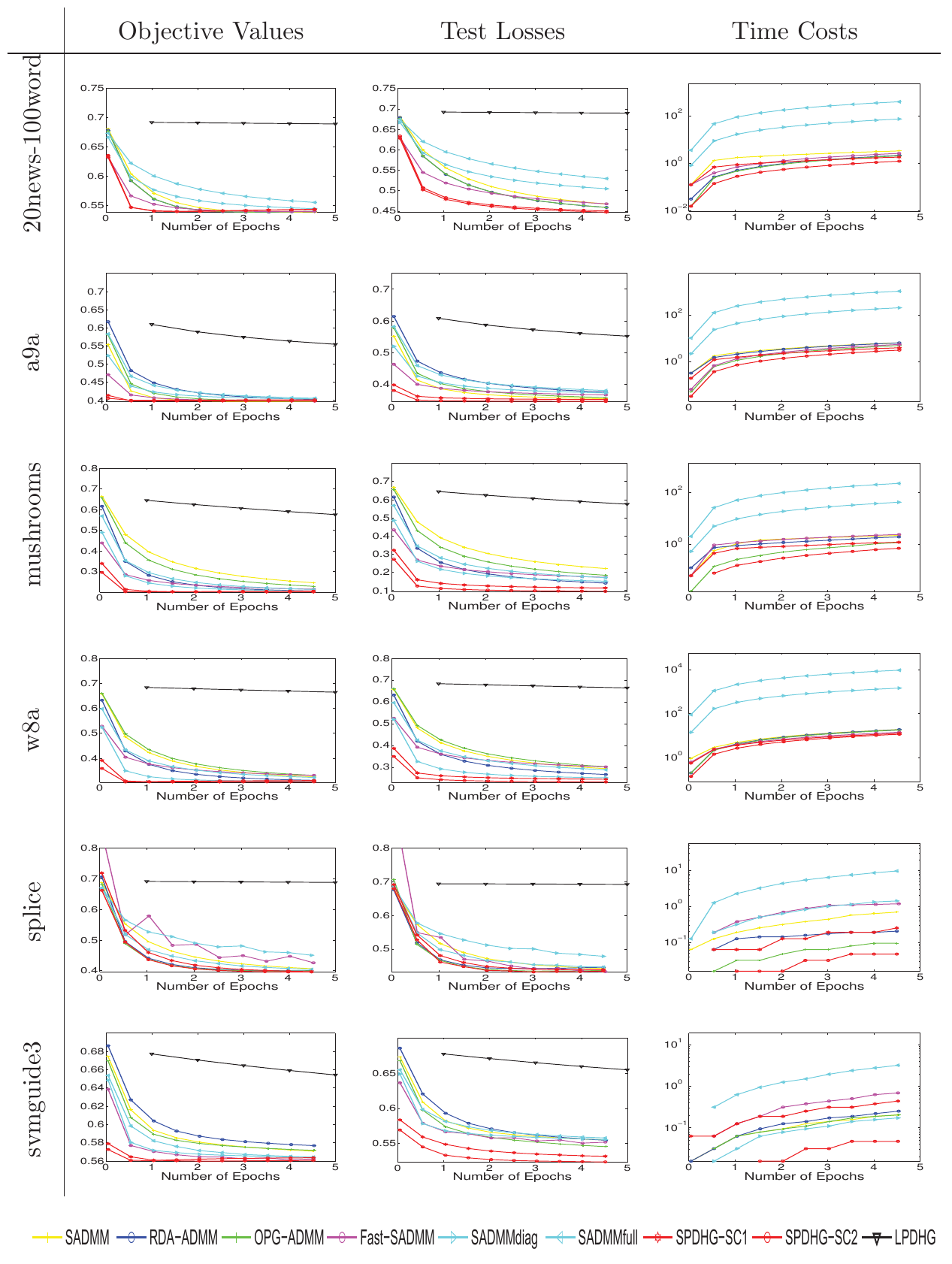}}
    \caption{Comparison of SPDHG-SC1 (Uniformly Averaged) and SPDHG-SC2 (Non-Uniformly Averaged) with STOC-ADMM (SADMM), RDA-ADMM, OPG-ADMM, Fast-SADMM (FSADMM), Ada-SADMMdiag, Ada-SADMMfull and LPDHG on \textbf{Graph-Guided Regularized Logistic Regression} Task. \textbf{Left Panels}: Averaged objective values. \textbf{Middle Panels}: Averaged test losses. \textbf{Right Panels}: Averaged time costs (in seconds).}
\label{fig-GGRLR-complexity}
%\vspace{-6pt}
\end{figure}
%---------------------------------------------------------------------------

%-----------------------------------------------------------------------
\begin{figure}
\center
{\includegraphics[width=0.8\textwidth]{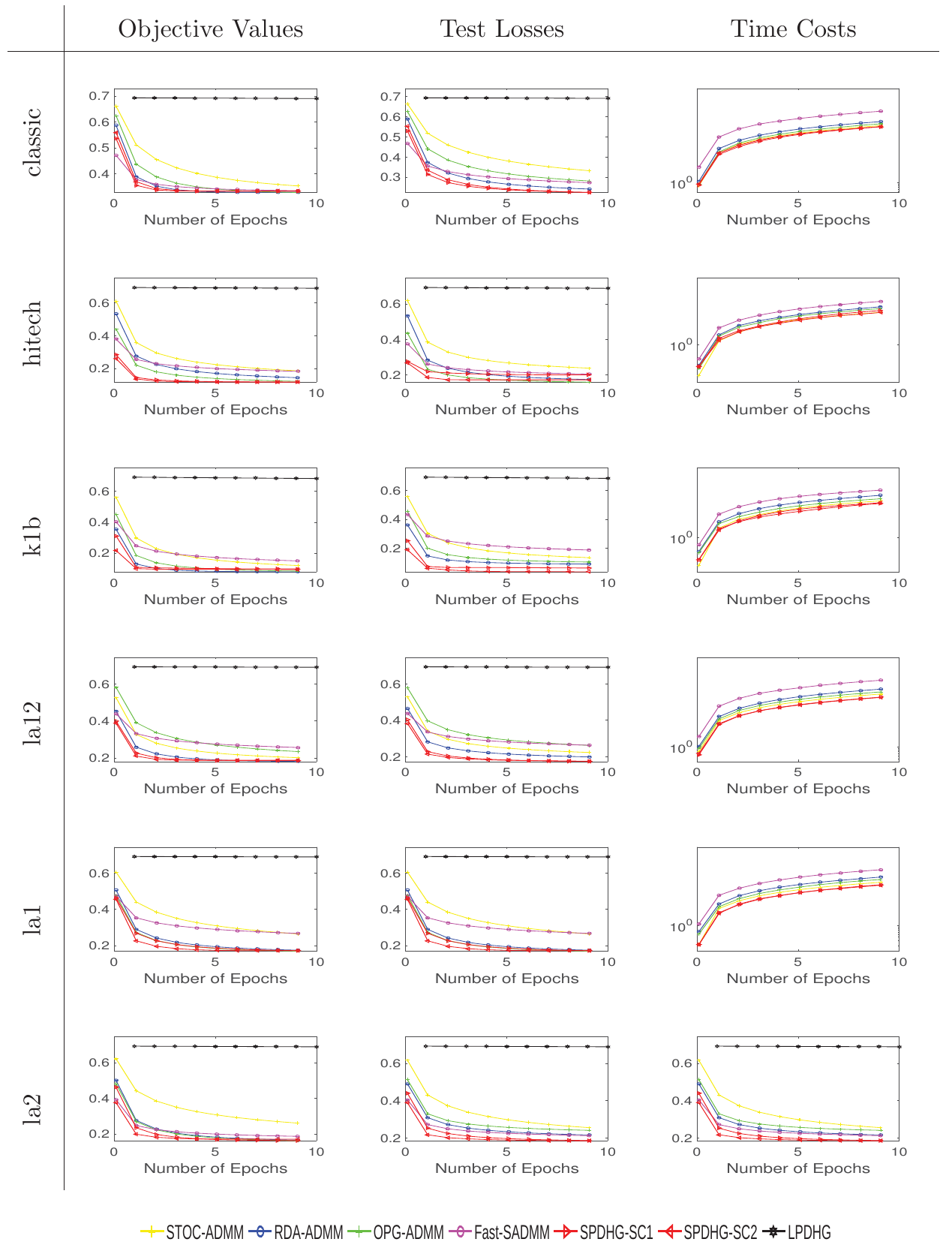}}
    \caption{Comparison of SPDHG-SC1 (Uniformly Averaged) and SPDHG-SC2 (Non-Uniformly Averaged) with STOC-ADMM (SADMM), RDA-ADMM, OPG-ADMM, Fast-SADMM (FSADMM), Ada-SADMMdiag, Ada-SADMMfull and LPDHG on \textbf{Graph-Guided Regularized Logistic Regression} Task. \textbf{Left Panels}: Averaged objective values. \textbf{Middle Panels}: Averaged test losses. \textbf{Right Panels}: Averaged time costs (in seconds).}
\label{fig-GGRLR-complexity-2}
%\vspace{-6pt}
\end{figure}
%---------------------------------------------------------------------------

%-----------------------------------------------------------------------
\begin{figure}
\center
{\includegraphics[width=0.8\textwidth]{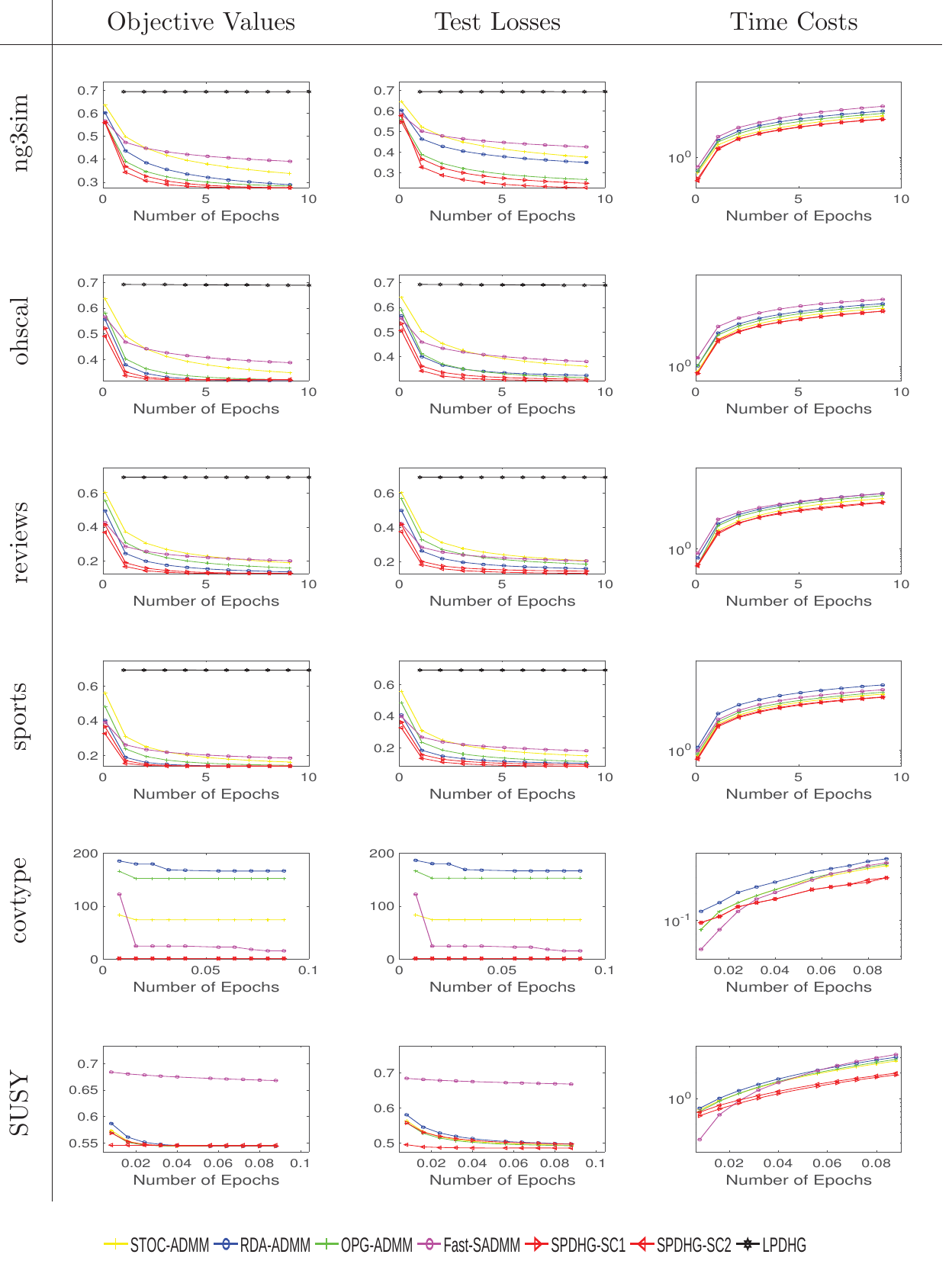}}
    \caption{Comparison of SPDHG-SC1 (Uniformly Averaged) and SPDHG-SC2 (Non-Uniformly Averaged) with STOC-ADMM (SADMM), RDA-ADMM, OPG-ADMM, Fast-SADMM (FSADMM), Ada-SADMMdiag, Ada-SADMMfull and LPDHG on \textbf{Graph-Guided Regularized Logistic Regression} Task. \textbf{Left Panels}: Averaged objective values. \textbf{Middle Panels}: Averaged test losses. \textbf{Right Panels}: Averaged time costs (in seconds).}
\label{fig-GGRLR-complexity-3}
%\vspace{-6pt}
\end{figure}
%---------------------------------------------------------------------------

The experiments are conducted on 17 binary classification datasets: \textit{classic}, \textit{hitech}, \textit{k1b}, \textit{la12}, \textit{la1}, \textit{la2}, \textit{reviews}, \textit{sports}, \textit{ohscal} \footnote{https://www.shi-zhong.com/software/docdata.zip}, \textit{a9a}, \textit{20news} \footnote{www.cs.nyu.edu/roweis/data.html}, \textit{splice}, \textit{svmguide3}, \textit{mushrooms}, \textit{w8a}, \textit{covtype}, \textit{SUSY} \footnote{https://www.csie.ntu.edu.tw/\~{}cjlin/libsvm/}. On each dataset, we use 80\% samples for training and 20\% for testing, and calculate the lipschitz constant $L$ as its classical upper bound $\hat{L} = 0.25\max_{1\leq i\leq n}\|a_i\|^2$. 
The regularization parameters are set to $\lambda = 10^{-5}$ and $\gamma = 10^{-2}$. 
To reduce statistical variability, experimental results are averaged over 10 repetitions. 
We set the parameters of SPDHG exactly following our theory while using cross validation to select the parameters of the other algorithms. 
Additionally, we use the metrics in \cite{Zhong-2014-Fast} to compare our algorithm with the other algorithms, including test losses, objective values and time costs to compare our algorithm with the other. The ``test loss" means the value of the empirically averaged loss evaluated on a test dataset, while the ``objective value" means the sum of the empirically averaged loss and regularized terms evaluated on a training dataset, and the ``time cost'' means the computational time consumption of each algorithm. { In addition, ``Number of Epochs" for the horizontal axis is the ratio between iteration number and data size.}

Specifically, we use test losses (\textit{i.e.}, $l(x)$) on test datasets, objective values (\textit{i.e.}, $l(x)+\lambda \|Fx\|_1$ on the GGLR task and $l(x)+\frac{\gamma}{2}\|x\|_2^2+\lambda \|Fx\|_1$ on the GGRLR task) on training datasets, and computational time costs on training datasets. Figure \ref{fig-GGLR-complexity} shows the objective values, test losses and time costs as the function of the number of epochs on the GGLR task, where the objective function is convex but not necessarily strongly convex. We observe that our algorithm SPDHG mostly achieves the best performance, surpassing six stochastic ADMM algorithms, all of which outperform LPDHG by a significant margin. FSADMM sometimes achieves better solutions but consumes much more computational time than SPDHG. In fact, our algorithm requires the least iterations and computational time among all the evaluated algorithms. Furthermore, the performance of our algorithm SPDHG on 17 datasets is most stable and effective among all algorithms as the high-probability analysis expected. 

We further compare our algorithm against the other algorithms on the GGRLR task, where the objective function is strongly convex, and experiments results on all the datasets listed in Table~\ref{tab:data}. The experimental results are displayed in 
Figure \ref{fig-GGRLR-complexity}, Figure~\ref{fig-GGRLR-complexity-2}, and Figure~\ref{fig-GGRLR-complexity-3}. Our algorithm still outperforms the other algorithms consistently, which supports our high-probability analysis in the previous sections. We also find that the difference between uniformly averaging and non-uniformly averaging is not significant. One reason is that our algorithm converges within only one or two effective epochs. In this case, non-uniformly averaging will not exhibit its advantage.

\section{Conclusions}
In this paper, we propose a novel convex-concave saddle point formulation to resolve problem~\eqref{prob} as well as the stochastic variant of the PDHG algorithm, named SPDHG. The new algorithm can tackle a variety of real-world problems which cannot be efficeintly solved by the existing stochastic primal-dual algorithms proposed in \cite{Lan-2015-optimal, Zhang-2015-Stochastic, Zhu-2015-Adaptive, Chambolle-2017-Stochastic}. We further provided the high-probability convergence analysis for the proposed SPDHG algorithm when applied to deal with general and strongly convex objectives, respectively.

The proposed SPDHG algorithm is well-suited for addressing compositely regularized minimization problems when the penalty matrix $F$ is non-diagonal. The experiments in performing graph-guided logistic regression and graph-guided regularized logistic regression tasks demonstrated that our algorithm outperforms the other competing stochastic algorithms.

%\section*{References}
\bibliographystyle{plain}
\bibliography{ref}

\end{document}